\def\eqref#1{equation~\ref{#1}}
\def\1{\bm{1}}
\DeclareMathAlphabet{\mathsfit}{\encodingdefault}{\sfdefault}{m}{sl}
\SetMathAlphabet{\mathsfit}{bold}{\encodingdefault}{\sfdefault}{bx}{n}
\newcommand{\mpara}[1]{\medskip\noindent{\bf #1}}
\newcommand{\private}[1]{\ensuremath{{#1}^{\dagger}}}
\newcommand{\privateG}{\private{G}}
\newcommand{\privateV}{\private{V}}
\newcommand{\privateE}{\private{E}}
\newcommand{\privateY}{\private{Y}}
\newcommand{\privateFeatures}{\private{X}} 
\newcommand{\pkg}{\textsc{PrivGnn}\xspace}
\newcommand{\singlegnn}{\textsc{SingleGNN}\xspace}
\newcommand{\knnrem}{\textsc{KNNRem}\xspace}
\newcommand{\patem}{\textsc{PateM}\xspace}
\newcommand{\pateg}{\textsc{PateG}\xspace}
\newcommand{\Lap}{\textsc{Lap}\xspace}
\newcommand{\Po}{\textsc{Pois}\xspace}
\newcommand{\renyi}{R\`{e}nyi\xspace}
\newcommand{\public}[1]{\ensuremath{{#1}}}
\newcommand{\publicG}{\public{G}}
\newcommand{\publicV}{\public{V}}
\newcommand{\publicE}{\public{E}}
\newcommand{\publicQueryNode}{\public{v}}
\newcommand{\publicQueryNodes}{\public{Q}}
\newcommand{\publicFeatures}{\public{X}}
\newtheoremstyle{mystyle}
  {}
  {}
  {\itshape}
  {}
  {\bfseries}
  {.}
  { }
  {\thmname{#1}\thmnumber{ #2}\thmnote{ (#3)}}
\theoremstyle{mystyle}
\newtheorem{RQ}{RQ}
\newtheorem{theorem}{Theorem}
\newtheorem{definition}[theorem]{Definition}
\newtheorem{lemma}[theorem]{Lemma}
\begin{document}

\title{Releasing Graph Neural Networks with Differential Privacy Guarantees}

\author{\name Iyiola E. Olatunji \email iyiola@l3s.de\\
      \addr L3S Research Center\\
      Germany
      \AND
      \name Thorben Funke \email tfunke@l3s.de \\
      \addr L3S Research Center, \\
      Germany
      \AND
      \name Megha Khosla \email m.khosla@tudelft.nl\\
      \addr TU Delft, \\
      Netherlands}


\newcommand{\fix}{\marginpar{FIX}}
\newcommand{\new}{\marginpar{NEW}}

\def\month{06}  
\def\year{2023} 
\def\openreview{\url{https://openreview.net/forum?id=wk8oXR0kFA}} 

\maketitle

\begin{abstract}
With the increasing popularity of graph neural networks (GNNs) in several sensitive applications like healthcare and medicine, concerns have been raised over the privacy aspects of trained GNNs. More notably, GNNs are vulnerable to privacy attacks, such as membership inference attacks, even if only black-box access to the trained model is granted. 
We propose \pkg, a privacy-preserving framework for releasing GNN models in a centralized setting. Assuming an access to a public unlabeled graph, \pkg provides a framework to release GNN models trained explicitly on public data along with knowledge obtained from the private data in a privacy preserving manner. \pkg combines the knowledge-distillation framework with the two noise mechanisms, random subsampling, and noisy labeling, to ensure rigorous privacy guarantees. 
We theoretically analyze our approach in the \renyi differential privacy framework.
Besides, we show the solid experimental performance of our method compared to several baselines adapted for graph-structured data. 
Our code is available at \url{https://github.com/iyempissy/privGnn}.

\end{abstract}

\section{Introduction}

In the past few years, graph neural networks (GNNs) have gained much attention due to their superior performance in a wide range of applications, such as social networks \cite{hamilton2017inductive}, biology \cite{ktena2018metric}, medicine~\cite{Ahmedt2021graphmedical}, and recommender systems~\cite{fan2019graph, zheng2021dgcn}.
Specifically, GNNs achieve state-of-the-art results in various graph-based learning tasks, such as node classification, link prediction, and community detection. 

Real-world graphs, such as medical and economic networks, are associated with sensitive information about individuals and their activities. Hence, they cannot always be made public. Releasing models trained on such data  provides an opportunity for using private knowledge beyond company boundaries \cite{Ahmedt2021graphmedical}. For instance, a social networking company might want to release a content labelling model trained on the private user data without endangering the privacy of the participants \cite{morrow2022emerging}.
However, recent works have shown that GNNs are vulnerable to membership inference attacks \cite{olatunji2021membership, he2021node, duddu2020quantifying}. Specifically, membership inference attacks aim to identify which data points have been used for training the model. 
The higher vulnerability of GNNs to such attacks as compared to traditional models has been attributed to their encoding of the graph structure within the model \cite{olatunji2021membership}.
In addition, the current legal data protection policies (e.g. GDPR) highlight a compelling need to develop \emph{privacy-preserving GNNs}.

We propose our framework \pkg, which builds on the rigid guarantees of \emph{differential privacy} (DP), allowing us to protect sensitive data while releasing the trained GNN model. DP is one of the most popular approaches for releasing data statistics or trained models while concealing the information about individuals present in the dataset~\cite{dwork2006calibrating}. The key idea of DP is that if we query a dataset containing $N$ individuals, the query's result will be almost indistinguishable (in a probabilistic sense) from the result of querying a neighboring dataset with one less or one more individual. 
Hence, each individual's privacy is guaranteed with a specific probability.  
Such probabilistic indistinguishability is usually achieved by incorporating sufficient noise into the query result.

\begin{figure*}[ht]
\centering
\includegraphics[width=\linewidth]{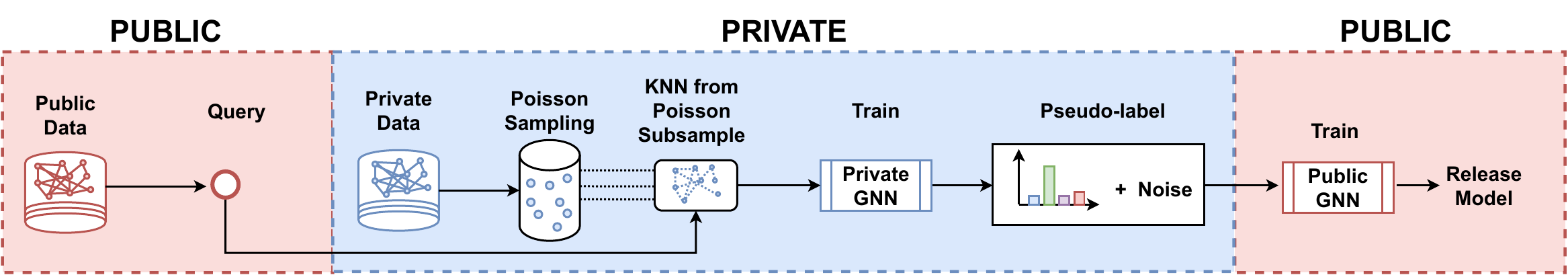}
\caption{Workflow of \pkg. Given \textit{labeled private} and \textit{unlabeled public} datasets,
\pkg starts by retrieving a subset of the private data using Poisson sampling. We then obtain the K-nearest neighbor nodes based on the features of the public query node. The teacher GNN model is trained on the graph induced on K-nearest neighbors. We obtain a pseudo-label for the query node by adding independent noise to the output posterior. The pseudo-label and data from the public graph are used in training the student model, which is then released. 
}
\label{fig:pkgworkflow}
\end{figure*}

The seminal work of \citet{abadi2016deep} proposed \emph{differential private stochastic gradient descent} (DP-SGD) algorithm to achieve differential privacy guarantees for deep learning models. Specifically, in each training step, DP-SGD adds appropriate noise to the $\ell_2$-clipped gradients during the stochastic gradient descent optimization. 
The incurred privacy budget $\varepsilon$ for training is computed using the moment's accountant technique. 
This technique keeps track of the privacy loss across multiple invocations of the noise addition mechanism applied to random subsets of the input dataset.

Besides the slow training process of DP-SGD, the injected noise is proportional to the number of training epochs, which further degrades performance. 
While DP-SGD is designed for independent and identically distributed data (i.i.d.), the nodes in graph data are related. In fact, GNNs make explicit use of the relational structure by recursive feature propagation over connected nodes. Hence, the privacy guarantee of DP-SGD, which requires a set of i.i.d.\ examples to form batches (subset of the training data that is used in a single iteration of the model training process) and lots (group of one or more batches), does not trivially hold for GNNs and graph data \cite{igamberdiev2021privacy}.

To work around DP-SGD's dependency on the training procedure, such as the number of epochs, \citet{papernot2016semi} proposed \emph{Private Aggregation of Teacher Ensembles} (PATE).
PATE leverages a large ensemble of teacher models trained on disjoint subsets of private data to transfer knowledge to a student model, which is then released with privacy guarantees. However, splitting graph data into multiple disjoint training sets destroys the structural information and adversely affects accuracy.

Since existing DP methods are not directly applicable to GNNs,
we propose a privacy-preserving framework, \pkg, for releasing GNN models with differential privacy guarantees. Similar to PATE's dataset assumption, we are given two graphs: a labeled private graph and an unlabeled public graph.  \pkg leverages the paradigm of knowledge distillation. The knowledge of the teacher model trained on the private graph is transferred to the student model trained only on the public graph in a differential privacy manner.
\pkg achieves practical privacy guarantees by combining the student-teacher training with two noise mechanisms: \textbf{random subsampling} using Poisson sampling and \textbf{noisy labeling mechanism} to obtain pseudo-labels for public nodes. In particular, we release the student GNN model, which is trained using a small number of public nodes labeled using the teacher GNN models developed exclusively for each public query node. We present a \emph{\renyi differential privacy} (RDP) analysis of our approach and provide tight bounds on incurred privacy budget or privacy loss.
Figure~\ref{fig:pkgworkflow} shows an overview of our \pkg approach.

To summarize, \textbf{our key contributions} include: (i) a \emph{novel privacy-preserving framework} for releasing GNN models (ii) \emph{theoretical analysis} of the framework using the RDP framework (iii) \emph{large scale empirical analysis} establishing the practical utility of the approach. Besides, we show the robustness of our model against two membership inference attacks.

\section{Related Works}
\label{sec:related_work}
Graph neural networks (GNNs) \cite{kipf2017semi,hamilton2017inductive,velickovic2018graph}, mainly popularized by graph convolution networks (GCNs) and their variants, compute node representations by recursive aggregation and transformation of feature representations of its neighbors. While they encode the graph directly into the model via the aggregation scheme, GNNs are highly vulnerable to membership inference attacks \cite{olatunji2021membership}, thus highlighting the need to develop privacy-preserving GNNs.

Existing works on privacy-preserving GNNs mainly focused on a distributed setting in which the node feature values or/and labels are assumed to be private and distributed among multiple distrusting parties~\citep{sajadmanesh2020locally, zhou2020vertically,wu2021fedgnn,shan2021towards}.  \citet{sajadmanesh2020locally} assumed that only the node features are sensitive while the graph structure is publicly available. They developed a local differential privacy mechanism to tackle the problem of node-level privacy.

\citet{wu2021fedgnn} proposed a federated framework for privacy-preserving GNN-based recommendation systems while focusing on the task of subgraph level federated learning. \citet{zhou2020vertically} proposed a privacy-preserving GNN learning paradigm for node classification tasks among distrusting parties such that each party has access to the same set of nodes. However, the features and edges are split among them. \citet{shan2021towards} proposed a server-aided privacy-preserving GNN for the node-level task on a horizontally partitioned cross-silo scenario via a secure pooling aggregation mechanism. They assumed that all data holders have the same feature domains and edge types but differ in nodes, edges, and labels.
 
In the centralized setting, \cite{sajadmanesh2022gap} obtained low-dimensional node features independent
of the graph structure and preserve the privacy of the node features by applying DP-SGD for node-level privacy. These features are then combined with multi-hop aggregated node embeddings, to which Gaussian noise is added. \citet{igamberdiev2021privacy} adapted diffe\-rent\-ially-private gradient-based training, DP-SGD \cite{abadi2016deep}, to GCNs for natural language processing task. However, as noted by the authors, the privacy guarantees of their approach might not hold because DP-SGD requires a set of i.i.d.\ examples to form batches and lots to distribute the noise effectively, whereas, in GNNs, the nodes exchange information via the message passing framework during training. Other works that utilized DP-SGD in computing their privacy guarantees include \cite{daigavane2021node, zhang2022towards}. In another line of work, \cite{tran2022heterogeneous} directly perturbs the node features and graph structure using randomized response.

We provide a comprehensive comparison between \pkg and existing works, outlining the specific differences in Appendix \ref{sec:differences-existing-works}.

\section{Our Approach}
\label{sec:our-approach}

\paragraph{Setting and notations.}
Here, we follow the setting of \citet{papernot2016semi} in which, in addition to the private graph (with node features and labels), a public graph (with node features) exists and is available to us. This is a fair assumption, as even for sensitive medical datasets, there exist some publicly available datasets. We also note that companies usually have huge amounts of data but might only release a subset of it for research purposes. A few examples include item-user graphs in recommender systems like the Movielens \cite{grouplens-research-2021} and Netflix dataset. Furthermore, in the field of information retrieval, Microsoft has released the ORCAS dataset \cite{craswell2020orcas}, which is a document-query graph that contains click-through data (for a practical real-world example, please refer to Appendix \ref{sec:motivating-example}).
It is important to note that the nodes of the public graph are unlabeled.

We denote a graph by $G = (V, E)$ where $V$ is the node-set, and $E$ represents the edges among the nodes. Let $X$ denote the feature matrix for the node-set $V$, such that $X(i)$ corresponds to the feature vector for node $i$. 
We use additionally the superscript~$\private{}$ to denote private data, such as the private graph $\privateG = (\privateV, \privateE)$ with node feature matrix $\privateFeatures$ and labels $\privateY$. 
For simplicity of notation, we denote public elements without any superscript, such as the public (student) GNN $\public{\Phi}$ or the set of query nodes $\publicQueryNodes\subset\publicV$.
In addition, we denote a node $v\in V$ and the $\ell$-hop neighborhood $\mathcal{N}^\ell(v)$ with $\mathcal{N}^{0}(v)=\{v\}$ which is recursively defined as: $\mathcal{N}^\ell(v) = \{u| (u,w)\in E \text{ and } w\in \mathcal{N}^{\ell-1}(v)\}$.

\subsection{PRIVGNN Framework}
We organize our proposed \pkg method into three phases: \emph{private data selection, retrieval of noisy pseudo-labels, and student model training}. The pseudocode for our \pkg method is provided in Algorithm~\ref{algo:proposedmethod}. We start by randomly sampling a set of query nodes $\publicQueryNodes$ from the public dataset (line 1). We will use the set $\publicQueryNodes$ together with the noisy labels (extracted from the private data) for training the student GNN as described below.

\begin{algorithm}
\DontPrintSemicolon
\SetKwInOut{KwHyp}{Hyperparam.}
\KwIn{ Private graph $\privateG=(\privateV,\privateE)$ with private node feature matrix $\privateFeatures$ and private labels \privateY; unlabeled public graph $\publicG=(\publicV, \publicE)$ with public features $\publicFeatures$}
\KwHyp{$K$ the number of training data points for training $\private{\Phi}$; $\beta$ the scale of Laplacian noise;  $\gamma$ the sub\-sam\-pling ratio} 
\KwOut{Student GNN, $\public{\Phi}$}

Sample public query node-set $Q\subset V$

\For{query $\publicQueryNode \in \publicQueryNodes$}{ 

Select random subset,$\private{\Hat{V}}\subset \private{V}$, using Poisson sampling:\linebreak
 For $\sigma_i\sim Ber(\gamma), v_i \in \private{V}$ select  $\private{\Hat{V}}=\{v_i|\sigma_i=1, v_i \in \private{V}\} $
 \label{algline:datasampling_start}

Retrieve the K-nearest-neighbors of $v$ in $\private{\Hat{V}}$;
$\private{V}_{\text{KNN}}(v)= \operatorname{argmin}_{K}\{\operatorname{d}(\publicFeatures(\publicQueryNode), \privateFeatures(u)), u\in \private{\Hat{V}}\}$

Construct the induced subgraph $H$ of the node-set $\private{V}_{\text{KNN}}(v)$ from the graph $\privateG$  \label{algline:datasampling_end}

 \label{algline:retrieval_start}

Initialize and train the GNN $\private{\Phi}$ on subgraph $H$ using the private labels $\privateY_{|H}$

Compute pseudo-label $\public{\Tilde{y}}_{\publicQueryNode}$ using noisy posterior of $\private{\Phi}$: 
$\public{\Tilde{y}}_{\publicQueryNode}=\operatorname{argmax}\{ \private{\Phi}(\publicQueryNode) +\{ \eta_1,\eta_2\ldots, \eta_c\}\}, \eta_i \sim \operatorname{Lap}(0,\beta)$

\label{algline:retrieval_end}
}

Train GNN $\public{\Phi}$ on $\publicG$ using the training set $\publicQueryNodes$ with the pseudo-labels $\public{\Tilde{Y}}= \{\public{\Tilde{y}}_{\publicQueryNode}|\publicQueryNode\in \publicQueryNodes\}$ \label{algline:student}

\Return{Trained student GNN model $\public{\Phi}$}
\caption{\pkg}
\label{algo:proposedmethod}
\end{algorithm}

\subsubsection{Private data selection (Lines 4-5)} 

Corresponding to a query, we first obtain a random subsample of the private data using the Poisson subsampling mechanism with sampling ratio $\gamma$ defined as follows (see Def.~\ref{def:poissonsampling}). We denote the retrieved subset of private nodes by $\private{\Hat{V}}$. We will see in our privacy analysis that such a data sampling mechanism leads to amplification in privacy.

\begin{definition}[PoissonSample]\label{def:poissonsampling}
Given a dataset $X$, the
mechanism PoissonSample outputs a subset of the data
$\{x_i|\sigma_i = 1, i \in [n]\}$ by sampling $\sigma_i \sim$ Ber$(\gamma)$ independently for $i = 1, \dots, n$.
\end{definition}
The mechanism is equivalent to the ``sampling without replacement'' scheme which involves finding a random subset of size $m$ at random with $m \sim$ Binomial$(\gamma, n)$. As $n \rightarrow \infty,$ $ \gamma \rightarrow 0$ while $\gamma n \rightarrow \zeta$, the Binomial distribution converges to a Poisson distribution with parameter $\zeta$. In Poisson sampling, each data point is selected independently with a certain probability, determined by the sampling rate $\gamma$. This method ensures that the overall privacy budget is conserved and allows us to more precisely characterize the RDP of our method (See Appendix \ref{sec:appendix_theory}).

\paragraph{Generating query-specific private subgraph.} Our next step is to generate a private subgraph to train a query-specific teacher GNN. We will then use the trained GNN to generate a pseudo-label for the corresponding query. To generate a query-specific subgraph, we extract the $K$-nearest neighbors of the query node from the retrieved subset $\private{\Hat V}$. In other words, for a query node $v\in Q$, we retrieve the node-set $\private{V}_{\text{KNN}(v)}$ with:
\begin{align*}
        \private{V}_{\text{KNN}}(v)= \operatorname{argmin}_{K}\{\operatorname{d}(\publicFeatures(\publicQueryNode), \privateFeatures(u)), u\in \private{\Hat V}\},
\end{align*}
where $\operatorname{d}(\cdot,\cdot)$ denotes a distance function, such as Euclidean distance or cosine distance, and $X(u)$ is the feature vector of the node $u$.
The subgraph $H$ induced by the node-set $ \private{V}_{\text{KNN}}(v)$ of the private graph $\privateG$ with the subset of features $\privateFeatures_{|H}$ and the subset of labels $\privateY_{|H}$ constitute the selected private data (for the query node $\publicQueryNode$). We use this selected private data to train a query-specific GNN.

\subsubsection{Retrieving noisy pseudo-labels (Lines 6-7)}
As the next step, we want to retrieve a pseudo-label for the public query node $\publicQueryNode$. 
We train the private teacher GNN $\private{\Phi}$ using the selected private data: the subgraph $H$ with features $\privateFeatures_{|H}$ and labels $\privateY_{|H}$. 
Then, we retrieve the prediction $\private{\Phi}(\publicQueryNode)$ using the $\ell$-hop neighborhood $\mathcal{N}^\ell(\publicQueryNode)$ (corresponding to $\ell$-layer GNN)  of the query node $\publicQueryNode$ and the respective subset of feature vectors from the public data. 
To preserve privacy, we add independent Laplacian noise to each coordinate of the posterior with noise scale $\beta=\nicefrac{1}{\lambda}$ to obtain the noisy pseudo-label $\public{\Tilde{y}}_{\publicQueryNode}$ of the query node $\publicQueryNode$
\begin{align*}
    \public{\Tilde{y}}_{\publicQueryNode}=\operatorname{argmax}\left\{ \private{\Phi}(\publicQueryNode) +\{ \eta_1,\eta_2\ldots, \eta_c\}\right\}, \eta_i \sim \operatorname{Lap}(0,\beta).
\end{align*}
We note that the teacher GNN is applied in an inductive setting. That is, to infer labels on a public query node that was not seen during training. Therefore, the GNN model most effective in an inductive setting like GraphSAGE \cite{hamilton2017inductive} should be used as the teacher GNN.

\subsubsection{Student model (transductive) training} 
Our private models $\private{\Phi}$ only answer the selected number of queries from the public graph. This is because each time a model $\private{\Phi}$ is queried, it utilizes the model trained on private data, which will lead to increased privacy costs. The privately (pseudo-)labeled query nodes and the unlabeled public data are used to train a student model $\public{\Phi}$ in a transductive setting. 
We then release the public student model $\public{\Phi}$ and note that the public model $\public{\Phi}$ is differentially private.
The privacy guarantee is a result of the two applied noise mechanisms, which lead to the plausible deniability of the retrieved pseudo-labels and by the post-processing property of differential privacy. 
\subsubsection{Privacy analysis}
\label{sec:mainresult}
We start by describing DP and the related privacy mechanism, followed by a more generalized notion of DP, i.e., RDP. 
Specifically, we utilize the bound on subsampled RDP and the advanced composition theorem for RDP to derive practical privacy guarantees for our approach. For a complete derivation of our privacy guarantee, see Appendix~\ref{sec:appendix_theory}.

\paragraph{Differential privacy~\cite{dwork2006calibrating}.}
DP is the most common notion of privacy for algorithms on statistical databases. Informally, DP bounds the change in the output distribution of a mechanism when there is a small change in its input. Concretely, $\varepsilon$-DP puts a multiplicative upper bound on the worst-case change in output distribution when the input differs by exactly one data point.

 \begin{definition}[$\varepsilon$-DP ]
 \label{def:epsdp}  A mechanism $\mathcal{M} \colon \mathcal{X} \rightarrow \Theta$ is $\varepsilon$-DP if for
every pair of neighboring datasets $X, X' \in \mathcal{X}$, and every
possible (measurable) output set $E \subseteq \Theta$ the following inequality holds:
 \begin{align*}Pr[\mathcal{M}(X) \in E] \leq e^\varepsilon Pr[\mathcal{M}(X') \in E].\end{align*}
\end{definition}

\paragraph{Differential privacy for graph datasets.} In this work, we consider the case where the adjacent datasets are represented by neighboring graphs, namely $G$ and $G'$. One can then consider node-DP in which $G$ and $G'$ differ by a single node and its corresponding adjacent edges or edge-DP in which the neighboring graphs differ by a single edge. In this paper, we provide guarantees for node-DP which also provides a stronger give a stronger privacy protection than edge differential privacy.

An example of an $\varepsilon$-DP algorithm is the \textbf{Laplace mechanism} which allows releasing a noisy output to an arbitrary query with values in $\mathbb{R}^n$. The mechanism is defined as
\begin{equation*}
\label{eqn:lap}
\mathbb{L}_\varepsilon f(x) \triangleq f(x) + \operatorname{Lap}(0, \Delta_1/\varepsilon),
\end{equation*}
where $\operatorname{Lap}$ is the Laplace distribution and $\Delta_1$ is the $\ell_1$ sensitivity of the query $f$. Since GNN $\private{\Phi}$ returns a probability vector, the $\ell_1$ sensitivity is 1.
Here the second parameter, $\Delta_1/\varepsilon$ is also referred to as the scale parameter of the Laplacian distribution. Our final guarantees are expressed using $(\varepsilon,\delta)$-DP which is a relaxation of $\varepsilon$-DP and is defined as follows.
 
\begin{definition}[$(\varepsilon, \delta)$-DP ]
\label{def:epsdeltadp}
 A mechanism $\mathcal{M} \colon \mathcal{X} \rightarrow \Theta$ is $(\varepsilon, \delta)$-DP if for
every pair of neighboring datasets $X, X' \in \mathcal{X}$, and every
possible (measurable) output set $E \subseteq \Theta$ the following inequality holds: $$Pr[\mathcal{M}(X) \in E] \leq e^{\varepsilon} Pr[\mathcal{M}(X') \in E] + \delta.$$
\end{definition}
\paragraph{\renyi differential privacy (RDP).}
RDP is a generalization of DP based on the concept of \renyi divergence. We use the RDP framework for our analysis as it is well-suited for expressing guarantees of the composition of heterogeneous mechanisms, especially those applied to data subsamples.
\begin{definition}[RDP \cite{mironov2017renyi}] \label{def:rdp}
A mechanism $\mathcal{M}$ is $(\alpha, \varepsilon)$-RDP with order $\alpha \in (1, \infty)$ if for all neighboring datasets $X, X'$ the following holds
\begin{equation*}
    D_\alpha(\mathcal{M}(X) || \mathcal{M}(X')) = \frac{1}{\alpha - 1} \log E_{\theta \sim \mathcal{X}} \left[ \left(\frac{p\mathcal{M}(X)(\theta)}{p\mathcal{M}(X')(\theta)} \right)^ \alpha \right]\leq \varepsilon.
\end{equation*}
\end{definition}
As $\alpha \rightarrow \infty$, RDP converges to the pure $\varepsilon$-DP.
In its functional form, we denote $\varepsilon_{\mathcal{M}}(\alpha)$ as the RDP $\varepsilon$ of $\mathcal{M}$ at order $\alpha$.
The function $\varepsilon_{\mathcal{M}}(\cdot)$ provides a clear characterization of the privacy guarantee associated with $\mathcal{M}$. 
In this work, we will use the following RDP formula corresponding to the Laplacian mechanism
\begin{align*}
      \varepsilon_{\Lap}(\alpha) = \frac{1}{\alpha -1} \log\left(\left(\frac{\alpha}{2\alpha -1}\right) e^{\frac{\alpha -1}{\beta}} + \left(\frac{\alpha -1}{2\alpha -1}\right)e^{\frac{-\alpha}{\beta}} \right)
\end{align*}
for $\alpha > 1$, where $\beta$ is the scale parameter of the Laplace distribution.
More generally, we will use the following result to convert RDP to the standard $(\varepsilon, \delta)$-DP for any $\delta > 0$.
\begin{lemma}[From RDP to ($\varepsilon, \delta$)-DP]
\label{lem:rdptoepsdeltadp}
If a mechanism $\mathcal{M}_1$ satisfies
$(\alpha, \varepsilon)$-RDP, then $\mathcal{M}_1$ also satisfies $(\varepsilon + \frac{\log 1/\delta}{\alpha-1}, \delta)$-DP for any
$\delta \in (0, 1)$. 
\end{lemma}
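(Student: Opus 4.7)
The plan is to relate the $(\alpha,\varepsilon)$-RDP bound on R\`enyi divergence to a tail bound on the \emph{privacy loss random variable} and then split the event probability into a typical and an atypical part. Concretely, I would define
$$Z(\theta) \triangleq \log \frac{p_{\mathcal{M}_1(X)}(\theta)}{p_{\mathcal{M}_1(X')}(\theta)},$$
and for any measurable $E \subseteq \Theta$ and threshold $t > 0$ write
$$Pr[\mathcal{M}_1(X) \in E] \leq e^t \, Pr[\mathcal{M}_1(X') \in E] + Pr_{\theta \sim \mathcal{M}_1(X)}[Z(\theta) > t].$$
The first summand already matches the $e^{\varepsilon'} Pr[\mathcal{M}_1(X') \in E]$ form required by Definition~\ref{def:epsdeltadp}; it remains to show the tail term is at most $\delta$ for the right choice of $t$.

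Next, I would bound that tail by a Chernoff-style argument, exponentiating by $\alpha - 1 > 0$ and applying Markov:
$$Pr_{\theta \sim \mathcal{M}_1(X)}[Z(\theta) > t] \;\leq\; e^{-(\alpha-1)t}\; E_{\theta \sim \mathcal{M}_1(X)}\!\left[e^{(\alpha-1) Z(\theta)}\right].$$
A change-of-measure step rewrites the expectation using the likelihood ratio $L = p_{\mathcal{M}_1(X)}/p_{\mathcal{M}_1(X')}$, since
$$E_{\theta \sim \mathcal{M}_1(X)}\!\left[L(\theta)^{\alpha-1}\right] = E_{\theta \sim \mathcal{M}_1(X')}\!\left[L(\theta)^{\alpha}\right],$$
and the right-hand side equals $e^{(\alpha-1) D_\alpha(\mathcal{M}_1(X)\|\mathcal{M}_1(X'))} \leq e^{(\alpha-1)\varepsilon}$ by the RDP hypothesis of Definition~\ref{def:rdp}. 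Chaining the two inequalities gives $Pr[Z > t] \leq e^{(\alpha-1)(\varepsilon - t)}$.

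Finally, I would pick $t$ to balance the two contributions: solving $(\alpha-1)(\varepsilon - t) = \log \delta$ yields $t = \varepsilon + \frac{\log(1/\delta)}{\alpha-1}$, and substituting this back into the splitting inequality produces exactly the $(\varepsilon + \frac{\log(1/\delta)}{\alpha-1},\, \delta)$-DP guarantee. The only delicate point is the change-of-measure identity and the implicit assumption that both output distributions are absolutely continuous with respect to a common dominating measure so that $L$ is well-defined almost everywhere; for the continuous and discrete output spaces that arise from Gaussian, Laplace, and subsampled mechanisms this is automatic, so I do not expect a genuine obstacle. The overall argument is essentially a single-shot application of Markov's inequality to the exponentiated privacy loss, with RDP providing the moment bound that drives the tail estimate.
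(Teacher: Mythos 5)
Your argument is correct and complete: the split into a typical event plus a tail, the Markov bound on the exponentiated privacy loss, and the change-of-measure identity reducing the $(\alpha-1)$-th moment under $\mathcal{M}_1(X)$ to the $\alpha$-th moment under $\mathcal{M}_1(X')$ (which is exactly $e^{(\alpha-1)D_\alpha}$) is precisely the standard proof of this conversion. The paper itself imports the lemma from \cite{mironov2017renyi} without proof, and your derivation matches the one given there, including the final choice $t=\varepsilon+\frac{\log(1/\delta)}{\alpha-1}$.
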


For a composed mechanism $\mathcal{M} = (\mathcal{M}_1, \dots,\mathcal{M}_t)$, we employ Lemma \ref{lem:rdptoepsdeltadp} to provide an $(\varepsilon, \delta)$-DP guarantee
\begin{align}
\label{eqn:deltatoeps}
\delta \Rightarrow \varepsilon : \varepsilon(\delta) = \min_{\alpha>1} \frac{\log(1/\delta)}{\alpha - 1} + \varepsilon_{\mathcal{M}}(\alpha - 1),   
\end{align}
Another notable advantage of RDP over $(\varepsilon, \delta)$-DP is that it composes very naturally.

\begin{lemma}[Composition with RDP] \label{lem:rdpcomposition}
Let $\mathcal{M} = (\mathcal{M}_1, \dots,\mathcal{M}_t)$ be mechanisms where $\mathcal{M}_i$ can potentially depend on the outputs of $\mathcal{M}_1, \dots,\mathcal{M}_{i-1}$. Then $\mathcal{M}$ obeys RDP with $\varepsilon_{M}(\cdot) = \sum_{i=1}^{t} \varepsilon_{M_i}(\cdot).$
\end{lemma}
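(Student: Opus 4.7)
The plan is to proceed by induction on $t$. The base case $t=1$ is immediate, and the inductive step reduces to the case $t=2$ by treating the first $t-1$ mechanisms as a single composed mechanism whose RDP, by the induction hypothesis, is $\sum_{i=1}^{t-1}\varepsilon_{\mathcal{M}_i}(\cdot)$. So the substantive work is to prove $\varepsilon_{(\mathcal{M}_1,\mathcal{M}_2)}(\alpha) \leq \varepsilon_{\mathcal{M}_1}(\alpha) + \varepsilon_{\mathcal{M}_2}(\alpha)$ for an arbitrary adaptive $\mathcal{M}_2$.

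First I would fix neighboring datasets $X, X'$ and denote by $p_1, p'_1$ the densities of $\mathcal{M}_1(X), \mathcal{M}_1(X')$. For the adaptive $\mathcal{M}_2$, let $p_{2\mid 1}(\theta_2\mid\theta_1)$ and $p'_{2\mid 1}(\theta_2\mid\theta_1)$ denote the conditional densities of $\mathcal{M}_2$ on inputs $X$ and $X'$ respectively, given that $\mathcal{M}_1$ produced $\theta_1$. The joint density of $(\mathcal{M}_1(X), \mathcal{M}_2(X))$ then factors as $p_1(\theta_1)\, p_{2\mid 1}(\theta_2\mid\theta_1)$, with the analogous factorization for the primed version.

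Next I would plug this factorization into the integral form of the Rényi divergence and split the double integral using Fubini:
\begin{align*}
&\int\!\!\int p_1(\theta_1)^\alpha\, p_{2\mid 1}(\theta_2\mid\theta_1)^\alpha\, p'_1(\theta_1)^{1-\alpha}\, p'_{2\mid 1}(\theta_2\mid\theta_1)^{1-\alpha}\, d\theta_2\, d\theta_1 \\
&\quad= \int p_1(\theta_1)^\alpha p'_1(\theta_1)^{1-\alpha} \left[\int p_{2\mid 1}(\theta_2\mid\theta_1)^\alpha p'_{2\mid 1}(\theta_2\mid\theta_1)^{1-\alpha} d\theta_2\right] d\theta_1.
\end{align*}
The bracketed inner integral equals $\exp\!\bigl((\alpha-1)\, D_\alpha(\mathcal{M}_2(X)\mid\theta_1\,\|\,\mathcal{M}_2(X')\mid\theta_1)\bigr)$, which by the RDP hypothesis for $\mathcal{M}_2$ is at most $\exp\!\bigl((\alpha-1)\varepsilon_{\mathcal{M}_2}(\alpha)\bigr)$. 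Pulling this uniform bound out of the outer integral leaves $\exp\!\bigl((\alpha-1)\varepsilon_{\mathcal{M}_2}(\alpha)\bigr)\cdot\exp\!\bigl((\alpha-1)D_\alpha(\mathcal{M}_1(X)\|\mathcal{M}_1(X'))\bigr)$, and applying $\tfrac{1}{\alpha-1}\log$ yields the additive bound.

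The main obstacle is the adaptivity: because $\mathcal{M}_2$ sees $\theta_1$ before acting, one must justify that its conditional Rényi divergence is bounded by $\varepsilon_{\mathcal{M}_2}(\alpha)$ for \emph{every} realized $\theta_1$, not merely in expectation. This is precisely where the worst-case quantification in the RDP definition matters: for each fixed $\theta_1$, the map $\mathcal{M}_2(\,\cdot\,;\theta_1)$ is itself a mechanism on $\mathcal{X}$ and must therefore satisfy the RDP guarantee on every neighboring pair $X, X'$. That uniformity is what allows the inner bound to be factored outside the outer integral, and it is the only nontrivial ingredient beyond bookkeeping.
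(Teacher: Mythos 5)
Your argument is correct and is essentially the standard adaptive-composition proof for R\'enyi differential privacy (factor the joint density, apply Fubini--Tonelli, bound the inner conditional R\'enyi integral uniformly in $\theta_1$ using the worst-case quantification in the RDP definition, and pull the constant out). The paper itself gives no proof of this lemma---it is stated as a known result from the RDP literature---so your write-up simply supplies the standard argument that the paper implicitly relies on, and you correctly identify the one nontrivial point, namely that adaptivity is handled because $\mathcal{M}_2(\,\cdot\,;\theta_1)$ must satisfy its RDP bound for every fixed $\theta_1$.
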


Lemma \ref{lem:rdpcomposition} implies that the privacy loss by the composition of two mechanisms $\mathcal{M}_1$ and $\mathcal{M}_2$ is
$$\varepsilon_{\mathcal{M}_1 \times \mathcal{M}_2} (\cdot)=[\varepsilon_{\mathcal{M}_1} + \varepsilon_{\mathcal{M}_2} ](\cdot).$$

\paragraph{Privacy amplification by subsampling.}
\label{sec:subsampling}
A commonly used approach in privacy is \emph{subsampling} in which the DP mechanism is applied to the randomly selected sample from the data. Subsampling offers a stronger privacy guarantee in that the one data point that differs between two neighboring datasets has a decreased probability of appearing in the smaller sample. See Appendix~\ref{sec:appendix_theory}, Theorem~\ref{theo:upperbound} for details. 

\subsection{Privacy Guarantees of PRIVGNN}
\begin{theorem}
\label{thm:mainresult}
For any $\delta>0$, Algorithm \ref{algo:proposedmethod} is $(\varepsilon, \delta)$-DP with 
\begin{equation}
    \label{eq:mainresult}
    \resizebox{0.43\textwidth}{!}{$\varepsilon \le \log\left(\frac{1}{\sqrt{\delta}}\right) +|Q|\log{\left( 1+\gamma^2  \left(\frac{2}{3}e^{1/\beta} + \frac{1}{3} e^{-2/\beta}-1 \right) \right)},$}
\end{equation}
where $Q$ is the set of query nodes chosen from the public dataset, $\beta$ is the scale of the Laplace mechanism.
\end{theorem}
\begin{proof}[Proof Sketch] First, using the Laplacian mechanism, the computation of the noisy label for each public query node is
$\nicefrac{1}{\beta}$-DP. We perform the transformation of the privacy variables using the RDP formula for the Laplacian mechanism.  
As the model uses only a random sample of the data to select the nodes for training the private model, we, therefore, apply the tight advanced composition of Theorem \ref{theo:upperbound} to obtain $\varepsilon_{\Lap o \Po}(\alpha)$. To obtain a bound corresponding to $Q$ queries, we further use the advanced composition theorem of RDP. 
The final expression is obtained by appropriate substitution of $\alpha$ and finally invoking Lemma \ref{lem:rdptoepsdeltadp} for the composed mechanisms. The detailed proof is provided in Appendix \ref{sec:proof}.
\end{proof}
\textbf{Remark.} We remark that Equation \eqref{eq:mainresult} provides only a rough upper bound. It is not suggested to use it for manual computation of $\varepsilon$ as it would give a much larger estimate.
We provide it here for simplicity and to show the effect of the number of queries and sampling ratio on the final privacy guarantee. 
For our experiments, following \citet{mironov2017renyi}, we numerically compute the privacy budget with $\delta<\nicefrac{1}{|\private{V}|}$ while varying $\alpha \in \{2,\ldots, 32\}$ and report the corresponding best $\varepsilon$.

\section{Experimental Evaluation}
\label{sec:exp_setup}
With our experiments, we aim to answer the following research questions:

\begin{RQ}
How do the performance and privacy guarantees of \pkg compare to that of the baselines?
\label{rq:1}
\end{RQ}
\begin{RQ}
What is the effect of sampling different K-nearest neighbors on the performance of private \pkg method?
\label{rq:2}
\end{RQ}
\begin{RQ}
How does the final privacy budget of different approaches compare with an increase in the number of query nodes ($|Q|$)?
\label{rq:3}
\end{RQ}
\begin{RQ}
What is the effect of varying the sampling ratios on the privacy-utility trade-off for \pkg?
\label{rq:4}
\end{RQ}
\begin{RQ}
How do various design choices of \pkg affect its performance?
\label{rq:5}
\end{RQ}

Besides, we conduct additional experiments to investigate  (i) the robustness of \pkg towards two membership inference attacks (c.f. Section \ref{sec:miattackpa}), (ii) the sensitivity of \pkg (c.f.\ Appendix \ref{sec:sensitivity}) to the size of private/public datasets and (iii) the effect of pre-training on the accuracy of \pkg (c.f.\ Appendix~\ref{sec:appendix_pretraining}). We also discuss the time and space requirements of \pkg in Appendix~\ref{sec:runtime} and the behavior of \pkg and PATE baselines when the features are more informative than the graph structure in Appendix~\ref{sec:informative_feature}.

\subsection{Experimental Setup}
\label{sec:experiment-setup}
\textbf{Datasets.} We perform our experiments on four representative datasets, namely Amazon~\cite{shchur2018pitfalls}, Amazon2M~\cite{chiang2019cluster}, Reddit~\cite{hamilton2017inductive} and Facebook~\cite{traud2012social}. We also perform additional experiments on ArXiv~\cite{hu2020open} and Credit defaulter graph~\cite{agarwal2021towards} with highly informative features. The detailed description of the datasets is provided in Appendix~\ref{sec:datasets_details} and their statistics in Table \ref{tab:datastat}.

\textbf{Baselines.} We compare our \pkg approach with three baselines. These are (i) \emph{non-private inductive baseline (B1)} in which we train a GNN model on all private data and test the performance of the model on the public test set, (ii) \emph{non-private transductive baseline (B2)} in which we train the GNN model using the public train split with the ground truth labels. The performance on the public test split estimates the ``best possible'' performance on the public data (iii) \emph{two variants of PATE baseline} namely \patem and \pateg, which use MLP and GNN respectively as their teacher models.
A detailed description of the baselines is provided in Appendix~\ref{sec:appendix_baselines}.

\textbf{Model and Hyperparameter Setup.} For the private and public GNN models, we employ a two-layer GraphSAGE model \cite{hamilton2017inductive} with a hidden dimension of 64 and ReLU activation function with batch normalization. For the MLP used in \patem, we used three fully connected layers and ReLU activation layers. All methods are compared using the same hyperparameter settings described in Appendix \ref{sec:hyperparameter}.

\textbf{Privacy Parameters.} 
We set the privacy parameter  $\lambda=\nicefrac{1}{\beta}$. To demonstrate the effects of this parameter, we vary $\lambda$ and report the corresponding privacy budget for all queries. We set the reference values as follows \{0.1, 0.2, 0.4, 0.8, 1\}. We set $\delta$ to $10^{-4}$ for Amazon, Facebook, Credit, and Reddit, and $10^{-5}$ for ArXiv and Amazon2M. We fix the subsampling ratio $\gamma$ to $0.3$ for all datasets except for Amazon2M, which is set to $0.1$. 
All our experiments were conducted for $10$ different instantiations, and we report the mean values across the runs in the main paper. The detailed results with the standard deviation are provided in Appendix \ref{sec:detailedaccuracy}.

\section{Results}
We now discuss the results of our five research questions presented in Section~\ref{sec:exp_setup}.


\subsection{Privacy-utility Trade-off (RQ \ref{rq:1})}

\begin{figure*}[!ht]
\centering
\begin{subfigure}{\linewidth}
\includegraphics[width=1\linewidth]{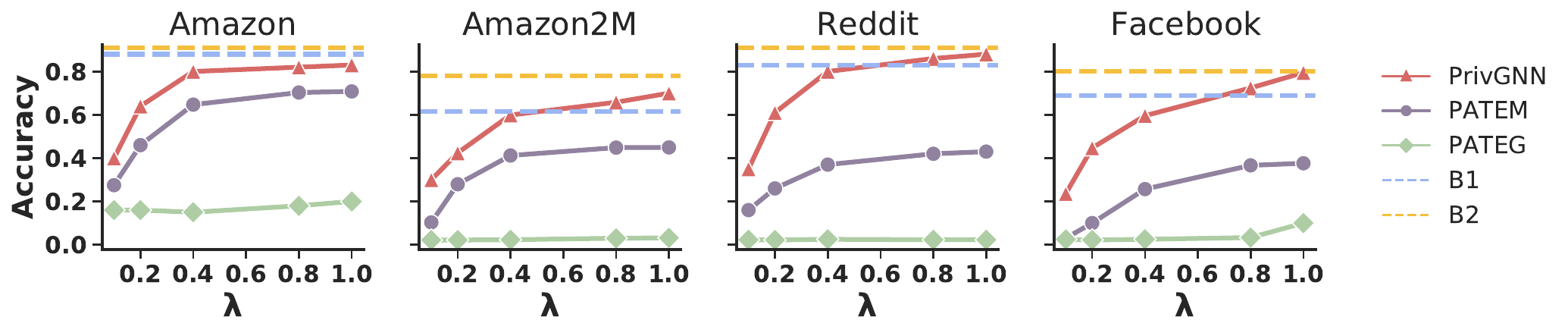}
\caption{Accuracy vs. Noise ($\propto \nicefrac{1}{\lambda}$) injected to each query. }
\label{fig:privutil}
\end{subfigure}
\begin{subfigure}{\linewidth}
\includegraphics[width=1\linewidth]{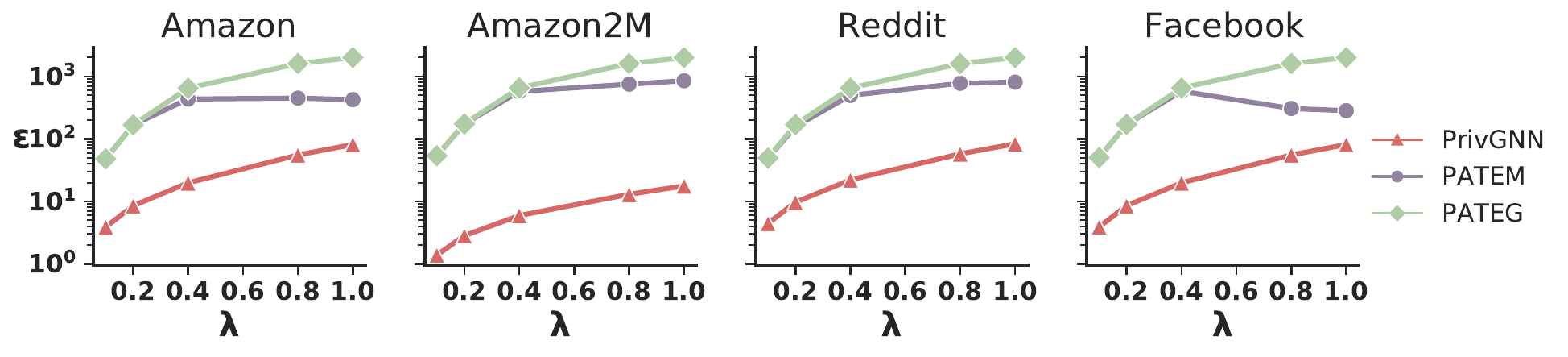}
\caption{Privacy budget vs. Noise ($\propto \nicefrac{1}{\lambda}$) injected to each query.}
\label{fig:epsbudget}
\end{subfigure}
\caption{\label{privacyutility} Privacy-utility analysis. Here, $|Q|$ is set to 1000. For \pkg, $\gamma$ is set to $0.1$ for Amazon2M and $0.3$ for other datasets.
}
\end{figure*}
Figure~\ref{privacyutility} shows the performance of our privacy-preserving \pkg method as compared to two private and two non-private baselines.
Since the achieved privacy guarantees depend on the injected noise level, we report the performance (c.f. Figure \ref{fig:privutil}) and the final incurred privacy budget (c.f. Figure \ref{fig:epsbudget}) with respect to $\lambda=\nicefrac{1}{\beta}$  which is inversely proportional to the injected noise for each query. Here, we set $|Q|=1000$ for all three private methods. 

First, the performance of our \pkg approach converges quite fast to the non-private method, B1, as the noise level decreases. Note that B1 used all the private data for training and has no privacy guarantees. 
The reason for this convergence is that as the noise level decreases in \pkg, the amount of noise introduced to the psuedo-labels decreases. As a result, the model can rely more on the less noisy psuedo-labels, leading to improved performance that approaches that of B1.
The second non-private baseline, B2, trained using part of the public data, achieves slightly higher results than B1. This is because B2 benefits from the larger size of the public dataset, which provides more diverse and representative samples for training. By having access to a larger and more varied set of data, B2 can capture a wider range of patterns and information, resulting in slightly improved performance compared to B1. For Amazon2M, Facebook and Reddit, \pkg outperforms B1 for $\lambda>0.4$.

Secondly, compared with the private methods (\patem and \pateg), \pkg achieves significantly better performance. 
For instance, at a smaller $\lambda=0.2$, we achieved 40\% and 300\% improvement in accuracy (with respect to private baselines), respectively, on the Amazon dataset. 
We observe that for datasets with a high average degree (Amazon2M, Facebook, and Reddit), \pkg achieved higher improvements of up to 134\% and 2672\% in accuracy when compared to the PATE methods.  Moreover, as we will discuss next, \pkg incurs significantly lower privacy costs as compared to the two private baselines. 

Thirdly, we plot the incurred privacy budget $\varepsilon$ corresponding to different $\lambda$ in Figure \ref{fig:epsbudget}.
\pkg achieves a relatively small $\varepsilon$ of $2.83$ on the Amazon2M dataset, while on the Amazon, Reddit, and Facebook dataset, a $\varepsilon$ of $8.53$ at $\lambda = 0.2$. The corresponding value of $\varepsilon$ on \patem and \pateg for the datasets are $167.82$ on Amazon, $173.82$ on Amazon2M, $157.90$ for Reddit, and $170.41$ for Facebook. \textit{In fact, \pkg achieves the performance of non-private baseline at $\varepsilon$ equal to $5.94$ and $19.81$ for Amazon2M and Reddit datasets respectively.}

It is not surprising that both \patem and \pateg have similar $\varepsilon$ on all datasets except for Reddit. This is because PATE methods depend on agreement or consensus among the teacher models, which further reduces $\varepsilon$. This indicates that there is no consensus among the teachers.
We observe that the privacy budget increases as the $\lambda$ gets larger. This phenomenon is expected since larger $\lambda$ implies low privacy and higher risk (high $\varepsilon$).

\textbf{Summary.} \pkg outperforms both variants of PATE by incurring a lower privacy budget and achieving better accuracy. We attribute this observation to the following reasons. \textbf{First}, our privacy budget is primarily reduced due to the subsampling mechanism, whereas PATE uses the complete private data.
\textbf{Second}, in the case of \pkg, we train a personalized GNN for each query node using nodes closest to the query node and the induced relations among them. We believe that this leads to more accurate labels used to train the student model, resulting in better performance. On the other hand, the worse privacy and accuracy of PATE indicates a larger disagreement among the teachers. The random data partitioning in PATE destroys the graph structure. Moreover, teachers trained on disjoint and disconnected portions of the data might overfit specific data portions, hence the resulting disagreement. The Appendix (Table \ref{tab:methodaccuracy}) contains the mean and standard deviation of the results shown in Figure \ref{fig:privutil}.


\subsection{
\label{sec:selectingk}
Effect of $K$ on the Accuracy (RQ \ref{rq:2})
}
To quantify the effect of the size of the random subset of private nodes on the accuracy of \pkg, we vary the number of neighbors $K$ used in $K$-nearest neighbors. 
Note that the computed privacy guarantee is independent of $K$. 
For the Amazon dataset, we set $K$ to $\{300, 750\}$, and for the Amazon2M, Reddit, and Facebook dataset, we set $K$ to \{750, 1000, 3000\}. 
We select smaller $K$ for Amazon due to the small size of the private dataset. 

As shown in Table~\ref{tab:varyingk}, a general observation is that the higher the $K$, the higher the accuracy. For instance, sampling $K=750$ nodes achieve over 50\% improvements on the Amazon dataset than using $K=300$ private nodes for training.
On the Amazon2M, Reddit, and Facebook datasets, a value of $K$ above $1000$ only increases the accuracy marginally. From the above results, we infer that the value of the hyperparameter $K$ should be chosen based on the average degree of the graph.
A small $K$ suffices for sparse graphs (Amazon), while for graphs with a higher average degree (Amazon2M, Reddit, and Facebook), a larger $K$ leads to better performance. We further quantify the effect of $K$ in \pkg by removing KNN and directly training with the entire private graph in our ablation studies(c.f. Section \ref{sec:ablation}).

\begin{table}[htbp]
\caption{Accuracy for varying $K$ and $|Q|=1000$. 
 }
 \centering
\setlength{\tabcolsep}{2.9pt}
\begin{tabular}{l*{12}{S[table-format=1.2,table-number-alignment=left,round-precision=2,round-mode=places]}} \toprule
&\multicolumn{2}{c}{\textbf{Amazon}}
&\multicolumn{3}{c}{\textbf{Amazon2M}} &\multicolumn{3}{c}{\textbf{Reddit}} &\multicolumn{3}{c}{\textbf{Facebook}} \\ \cmidrule(lr){2-3} \cmidrule(lr){4-6} \cmidrule(lr){7-9} \cmidrule(lr){10-12}

{$ \nicefrac{\lambda}{K}$} & {300}              & {750}              & {750}        & {1000}      & {3000}      & {750}        & {1000}       & {3000} & {750}        & {1000}       & {3000}      \\ \midrule
0.1                   & 0.17             & 0.4              & 0.216       & 0.26      & 0.30      & 0.3        & 0.33       & 0.35   & 0.153 & 0.228 & 0.236    \\
0.2                   & 0.34             & 0.64             & 0.357       & 0.408      & 0.424      & 0.48       & 0.53       & 0.61   & 0.257 & 0.405 & 0.447   \\
0.4                   & 0.53             & 0.8              & 0.413       & 0.571      & 0.599      & 0.65       & 0.72       & 0.8   & 0.33 & 0.554 & 0.596    \\
0.8                   & 0.56             & 0.82             & 0.441       & 0.589      & 0.658      & 0.69       & 0.76       & 0.86   & 0.378 & 0.693 & 0.724   \\
1.0                     & 0.59             & 0.83             & 0.471        & 0.638      & 0.70      & 0.71       & 0.77       & 0.88   & 0.401 & 0.755 & 0.795  \\ \bottomrule
\end{tabular}
\label{tab:varyingk}
\end{table}


\subsection{Effect of $|Q|$ on the Accuracy and Privacy (RQ \ref{rq:3})}
\label{sec:varyingqueries}

\begin{figure*}[!ht]
\centering
\includegraphics[width=1\linewidth]{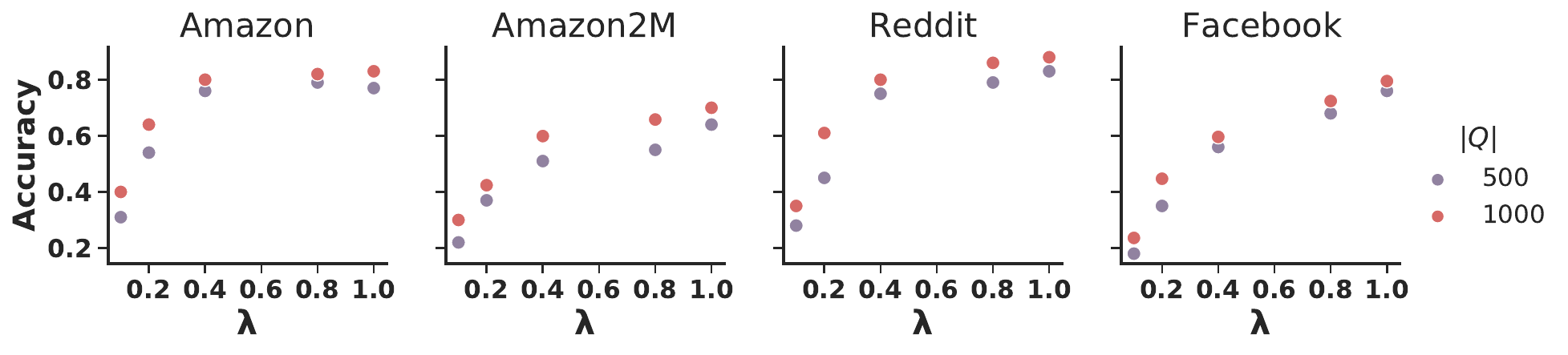}

\caption{Accuracy for the different number of queries answered by the teacher model of \pkg}
\label{fig:varyingquery}
\end{figure*}

Since the privacy budget is highly dependent on the number of queries answered by the teacher GNN model, we compare the performance and relative privacy budget incurred for answering different numbers of queries. 

In Figure \ref{fig:varyingquery}, we observe a negligible difference in accuracy when $1000$ query nodes are pseudo-labeled and when only $500$ queries are employed across all datasets at different noise levels except on Amazon2M. 
Further looking at detailed results in Table \ref{tab:varyingqueries-budget}, for the different ranges of $\lambda$ on the Amazon, Reddit, and Facebook datasets, we observe up to 46\% decrease in privacy budget of \pkg for answering $500$ queries over answering $1000$ queries. On the Amazon2M dataset, we observe up to 45\% decrease in privacy budget. This implies that smaller $|Q|$ is desirable, which \pkg offers. 

In Table~\ref{tab:varyingqueries-budget}, we also compare \pkg with PATE variants with respect to the incurred privacy budget $\varepsilon$ for answering the different number of queries. Our method offers a significantly better privacy guarantee (over $20$ times reduction in $\varepsilon$) than \patem and \pateg across all datasets.
While decreasing the number of queries shows a negligible change in accuracy, it greatly reduces the incurred privacy budget for \pkg. In particular, the privacy budget (see Table~\ref{tab:varyingqueries-budget}), $\varepsilon$, of \pkg is reduced by 50\% on the Amazon, Reddit, and Facebook datasets and by 40\% on the Amazon2M dataset with only a slight reduction in accuracy. To determine the point at which the tradeoff between privacy and utility becomes severely compromised, we conducted additional experiments by adjusting the number of queries, using the additional reference values \{300, 200, 100, 10\}. The results of these experiments are available in Appendix \ref{sec:varyingmorequeries}.

We remark that we also experimented with more intelligent strategies for query selection which did not show any performance gains. In particular, we employed structure-based approaches such as clustering and ranking based on centrality measures such as PageRank and degree centrality to choose the most representative query nodes but observed no performance gains over random selection. 

\begin{table}[h]

\begin{center}
\begin{small}
\begin{sc}
\caption{Privacy budget ($\varepsilon$) (lower the better) for varying number of queries $|Q|$ answered by the teacher. 
}
\label{tab:varyingqueries-budget}

\centering
\small
\setlength{\tabcolsep}{3.5pt}
\settowidth\rotheadsize{Amazon2M}
\begin{tabular}{@{}lll*{5}{S[table-format=3.2,table-number-alignment=right,round-precision=2,round-mode=places]}} \toprule
& $|\publicQueryNodes|$  & \textbf{$\lambda \rightarrow$} & {$0.1$}   & {$0.2$}    & {$0.4$}    & {$0.8$}     & {$1.0$}       \\ \midrule
 \multirow{6}{*}{ \rotcell{Amazon}}& \multirow{3}{*}{500} & \pkg                                                     & 2.67  & 5.69   & 13.15  & 31.1    & 44.07   \\
& &\patem                                                    & 27.82 & 87.82 & 223.08 & 233.46 & 220.57 \\
& &\pateg                                                      & 27.82 & 87.82 & 327.82 & 800.97 & 1000.97  \\ \cmidrule(lr){2-8}
& \multirow{3}{*}{1000} & \pkg                                                      & 3.9   & 8.53   & 19.81  & 55.3    & 81.23   \\
& &\patem                                                    & 47.82 & 167.82 & 434.69 & 449.36  & 424.78  \\
& &\pateg                                                     & 47.82 & 167.82 & 647.82 & 1600.97 & 1967.58 \\ \midrule
\multirow{6}{*}{ \rotcell{Amazon2M}} & \multirow{3}{*}{500}& \pkg                                                       & 0.968   & 1.962    & 4.032  & 8.729    & 11.168   \\
& & \patem                                                     & 33.82  & 93.82   & 294.89 & 379.92  & 416.65  \\
& & \pateg                                                     & 33.82  & 93.82   & 333.82 & 801.73  & 1000.73 \\ \cmidrule(lr){2-8}
& \multirow{3}{*}{1000} & \pkg                                                      & 1.385   & 2.829    & 5.944  & 12.977    & 17.73   \\
& & \patem                                                     & 53.82  & 173.82  & 573.80 & 751.09 & 850.63 \\
& & \pateg                                                     & 53.82  & 173.82  & 653.82 & 1601.73 & 2001.73 \\ \midrule
\multirow{6}{*}{ \rotcell{Reddit}} & \multirow{3}{*}{500}& \pkg                                                      & 2.67  & 5.69   & 13.15  & 31.1    & 44.07   \\
& & \patem                                                     & 29.43  & 83.73   & 243.45 & 392.23  & 415.35  \\
& & \pateg                                                     & 29.43  & 89.43   & 329.43 & 801.17  & 1001.17 \\ \cmidrule(lr){2-8}
& \multirow{3}{*}{1000} & \pkg                                                      & 3.9   & 8.53   & 19.81  & 55.3    & 81.23   \\
& & \patem                                                     & 49.43  & 157.90  & 496.61 & 775.95 & 808.43 \\
& & \pateg                                                     & 49.43  & 169.43  & 649.43 & 1601.17 & 2001.17 \\ \midrule

\multirow{6}{*}{ \rotcell{Facebook}} & \multirow{3}{*}{500}& \pkg                                                      & 2.673   & 5.96    & 13.15  & 31.10    & 44.07   \\
& & \patem                                                     & 30.41  & 90.41   & 293.44 & 151.02  & 136.41  \\
& & \pateg                                                     & 30.41  & 90.41   & 330.41 & 801.3  & 1000.3 \\ \cmidrule(lr){2-8}
& \multirow{3}{*}{1000} & \pkg                                                      & 3.90   & 8.53    & 19.81  & 55.30    & 81.23   \\
& & \patem                                                     & 50.41  & 170.41  & 579.15 & 306.12 & 282.63 \\
& & \pateg                                                     & 50.41  & 170.41  & 650.41 & 1601.3 & 2001.3 \\
\bottomrule
\end{tabular}
\end{sc}
\end{small}
\end{center}
\end{table}


\subsection{Varying Subsampling Ratio $\gamma$ (RQ \ref{rq:4})}
A smaller sampling ratio will reduce the amount of private data used for training which will, in turn, reduce the privacy budget $\varepsilon$. Therefore, to validate the effectiveness of the sampling ratio, we vary $\gamma$ from $0.1$ to $0.3$. This allows us to further benefit from the privacy amplification by subsampling as explained in Section \ref{sec:subsampling}.
Since $\gamma$ affects the amount of the available data used in training the teacher model, we only perform this experiment on representative datasets (Amazon2M and Reddit).
Table \ref{tab:samplingratio} demonstrates that the accuracy achieved using a sampling ratio of $\gamma=0.1$ on the Amazon2M dataset is only slightly lower than that of $\gamma=0.3$ for $\lambda=0.4$, with a decrease of only 6\%. This indicates that the sampled data at $\gamma=0.1$ captures similar underlying patterns and distributions to the data with $\gamma=0.3$. 
On the Reddit dataset, we observe a 35\% decrease in the accuracy when $\gamma=0.1$ at the same $\lambda$. 
Nonetheless, we observe a 315\% reduction in the privacy budget when $\gamma=0.1$. This implies that smaller $\gamma$ offers better $\varepsilon$ on both datasets at a relatively lower drop in accuracy. We conclude that \pkg provides an overall good privacy-utility trade-off with respect to $\gamma$. In other words, for larger datasets, a significant reduction in privacy costs can be achieved (when using smaller $\gamma$) with a relatively lower degradation in accuracy.

\begin{table}[htbp]
\centering
\setlength{\tabcolsep}{2.5pt}
\caption{Performance and the respective privacy budget for different sampling ratios $\gamma$ with $|\publicQueryNodes|=1000$ for Amazon2M and Reddit. The non-private baseline B1 achieves an accuracy of 0.62 and 0.78 for Amazon2M and Reddit, respectively. \pkg already achieves comparable or better accuracy with $\varepsilon$ of 5.94 and 19.81, respectively (marked in bold).}
\label{tab:samplingratio}
\begin{tabular}{c*{8}{S[table-format=2.2,table-number-alignment=left,round-precision=2,round-mode=places]}} \toprule
& \multicolumn{4}{c}{$\gamma=0.1$} & \multicolumn{4}{c}{$\gamma=0.3$} \\\cmidrule(lr){2-5} \cmidrule(lr){6-9}
& \multicolumn{2}{c}{Reddit} & \multicolumn{2}{c}{Amazon2M} & \multicolumn{2}{c}{Reddit} & \multicolumn{2}{c}{Amazon2M} \\\cmidrule(ll){2-3} \cmidrule(ll){4-5} \cmidrule(ll){6-7} \cmidrule(ll){8-9}
{$\lambda$} & {$\varepsilon$} & {Accuracy} & {$\varepsilon$} & {{Accuracy}} & {$\varepsilon$} & {{Accuracy}} & {$\varepsilon$} & {{Accuracy}}  \\ \midrule
0.1 & 1.20       & 0.19  & 1.39 & 0.30    & 3.90  & 0.35 & 4.45 & 0.35\\
0.2 & 2.47       & 0.37  & 2.83 & 0.42    & 8.53  & 0.61  & 9.69 & 0.51\\
0.4 & 5.20 & 0.52  &\textbf{5.94} & \textbf{0.60}   & \textbf{19.81}  & \textbf{0.80} & 22.11 & 0.64\\
0.8 & 11.82       & 0.63  & 12.98 & 0.66    & 55.30  & 0.86 & 57.60 & 0.69\\
1.0 & 15.44         & 0.65  & 17.17 & 0.70   & 81.23   & 0.88 & 83.53 & 0.77\\ \bottomrule              
\end{tabular}
\end{table}

\subsection{Ablation Studies (RQ \ref{rq:5})}
\label{sec:ablation}

\begin{table}[htbp]
\caption{Ablation studies. Here, we present the accuracy of different methods with varying $\lambda$ and $|Q|=1000$. We show for representative datasets Amazon2M and Reddit with $\gamma=0.1$ and $0.3$ respectively.
}
\label{tab:ablation}
\setlength{\tabcolsep}{3.5pt}
\centering
\small
\begin{tabular}{cccccccc} \toprule

\multicolumn{1}{l}{}    & \textbf{$\lambda \rightarrow$} & \textbf{$0.1$} & \textbf{$0.2$} & \textbf{$0.4$} & \textbf{$0.8$} & \textbf{$1.0$} &  \\ \midrule

\parbox[t]{2mm}{\multirow{3}{*}{\rotatebox[origin=c]{90}{Reddit}}} & \pkg         & $0.35$                     & $0.61$                     & $0.80$                     & $0.86$                     & $0.88$                   &  \\
                        & \knnrem          & $0.34$                     & $0.58$                     & $0.75$                     & $0.79$                     & $0.81$                   &  \\
                        & \singlegnn       & $0.28$                     & $0.54$                     & $0.72$                     & $0.77$                     & $0.79$  \\ \midrule

\parbox[t]{2mm}{\multirow{3}{*}{\rotatebox[origin=c]{90}{\scriptsize Amazon2M}}}  & \pkg         & $0.30$                     & $0.42$                     & $0.60$                     & $0.66$                     & $0.70$                   &  \\
                        & \knnrem          & $0.28$                     & $0.39$                     & $0.49$                     & $0.56$                     & $0.59$                   &  \\
                        & \singlegnn      & $0.25$     & $0.32$                     & $0.43$                     & $0.54$                     & $0.56$  \\\\ \bottomrule
                        
\end{tabular}
\end{table}
To answer RQ \ref{rq:5}, we conducted ablation studies by varying different components of our proposed model. First, we removed the KNN component (Line 4 in Algorithm \ref{algo:proposedmethod}) and constructed the induced subgraph on the entire subsampled dataset for each query. We refer to this as \knnrem. 
Secondly, besides removing the KNN component, we only sample once from the private data instead of sampling for each query. Hence, we only train a single teacher model using the graph induced by subsampling.  
We refer to this as \singlegnn.
We present our results in Table \ref{tab:ablation}. We remark that, theoretically, the same privacy guarantees as our method holds for these two ablations. In practice, on the other hand, \pkg would offer better privacy because only a smaller portion of the private sample is used for teacher training compared to the ablations in which the KNN component is removed.

We make the following observations from our results. 
First, we see up to 18\% and 8\% drop in accuracy with \knnrem on the Amazon2M and Reddit datasets, respectively. This indicates the importance of our KNN approach since only the most informative node for the query node is used in training and predicting the pseudo-label.

Secondly, considering the performance of \singlegnn, the advantage of the multiple subsampling and the KNN components is prominent in both datasets. At a higher noise level (when $\lambda=0.2$), we observe a performance drop of 23\% and 12\% on the Amazon2M and Reddit dataset, respectively, when \singlegnn is used as compared to \pkg.

\mpara{Additional experiments.} We performed several additional experiments. This includes two membership inference attacks to evaluate the privacy leakage of the released model. We observe that \pkg reduces the attacks to a random guess (Appendix \ref{sec:miattackpa}). We analyze the impact of informative node features on the performance of \pkg in Appendix \ref{sec:informative_feature}. The result indicates that \pkg in the presence of highly informative node features performs well than both \patem and \pateg baselines. To better understand the influence of the private-public graph size on the performance of \pkg, we perform a sensitivity analysis by flipping the graphs. The result shows that \pkg is not sensitive to the size of the private-public graphs (Appendix \ref{sec:sensitivity}). We also perform an experiment that utilizes a pre-training technique to improve the performance of \pkg. The result shows a slight improvement in the performance of \pkg (Appendix \ref{sec:appendix_pretraining}). In Appendix \ref{sec:joint-training}, we compare our method to an additional non-private baseline. This baseline involves the joint training of a single GNN on both the private and public graphs. Furthermore, in Appendix \ref{sec:noise-directly}, we evaluate another private baseline where Laplacian noise is directly applied to the ground truth label of each query node. Finally, we analyze the runtime and space requirements of \pkg in Appendix \ref{sec:runtime} and highlight the limitations and future works (Appendix \ref{sec:limit-future}).

\section{Conclusion}
We propose \pkg, a novel privacy-preserving framework for releasing GNN models with differential privacy guarantees. Our approach leverages the knowledge distillation framework, which transfers the knowledge of the \emph{teacher} models trained on private data to the \emph{student} model. Privacy is intuitively and theoretically  guaranteed due to private data subsampling as well as the noisy labeling of public data. 
Moreover, as we build \emph{personalized} teacher models by using the $K$-nearest neighbors of the corresponding query nodes, the trained teacher model is more confident in its prediction. Results from the ablation studies further support our design choices.  We present the privacy analysis of our approach by leveraging the \renyi differential privacy framework. Our experiment results on six real-world datasets show the effectiveness of our approach in obtaining good accuracy under practical privacy guarantees.

\bibliography{main}
\bibliographystyle{tmlr}

\appendix

\section*{Appendix}
\mpara{Organization.} The Appendix is organized as follows. We begin by providing the detailed accuracy scores of different methods in Appendix \ref{sec:detailedaccuracy}. Missing theoretical details and proof are provided in Appendix \ref{sec:appendix_theory}. We launch two black-box membership inference attacks in Appendix \ref{sec:miattackpa} followed by the analysis of the influence of highly informative features (Appendix \ref{sec:informative_feature}), the effect of the size of the public graph (Appendix \ref{sec:sensitivity}), a real-world example and motivation (Appendix \ref{sec:motivating-example}), additional baselines including the joint training on the private and public graph (Appendix \ref{sec:joint-training}), adding noise directly to the groundtruth labels of the query nodes (Appendix \ref{sec:noise-directly}), privacy utility tradeoff with varying amount of queries (Appendix \ref{sec:varyingmorequeries}), and the use of pre-training to improve performance (Appendix \ref{sec:appendix_pretraining}). Appendix \ref{sec:differences-existing-works} outlines the distinctions between prior works and clarifies why our approach cannot be compared directly. Appendix \ref{sec:datasets_details} provides a detailed description of datasets used followed by details on baselines, hyperparameters, and training in Appendix \ref{sec:appendix_baselines}. The runtime and space requirement of \pkg is discussed in Appendix \ref{sec:runtime}. Lastly, the limitations of the work are highlighted in Appendix \ref{sec:limit-future}.

\section{Detailed Results}
\label{sec:detailedaccuracy}
In Tables~\ref{tab:methodaccuracy} and \ref{tab:varyingquery}, we present the mean accuracy of the different methods as plotted in Figure \ref{privacyutility} and \ref{fig:varyingquery} respectively.
\begin{table}[htbp]
\setlength{\tabcolsep}{3.5pt}
\caption{Performance of each method on all datasets. Note that the non-private baselines B1 and B2 are not changing with different $\lambda$.}
\label{tab:methodaccuracy}

\centering
\begin{tabular}{cccccccc} \toprule

\multicolumn{1}{l}{}    & \textbf{Method} & \textbf{$\lambda=0.1$} & \textbf{$\lambda=0.2$} & \textbf{$\lambda=0.4$} & \textbf{$\lambda=0.8$} & \textbf{$\lambda=1$} &  \\ \midrule
\parbox[t]{2mm}{\multirow{5}{*}{\rotatebox[origin=c]{90}{Amazon}}} & \pkg             & $0.40 \pm 0.08$                    & $0.64 \pm 0.07$                  & $0.80 \pm 0.13$                    & $0.82 \pm 0.17$                   & $0.83 \pm 0.10$       \\
                        & \patem           & $0.28 \pm 0.04$                   & $0.46 \pm 0.03$                   & $0.65 \pm 0.04$                   & $0.70 \pm 0.03$                    & $0.71 \pm 0.03$        \\
                        & \pateg           & $0.16 \pm 0.05$                   & $0.16 \pm 0.07$                   & $0.15 \pm 0.08$                   & $0.18 \pm 0.11$                   & $0.20 \pm 0.09$         
                        \\ 
                        & B1 &$0.88 \pm 0.02$ & $0.88 \pm 0.02$ & $0.88 \pm 0.02$ & $0.88 \pm 0.02$ &$0.88 \pm 0.02$ \\
                        & B2 &$0.91 \pm 0.02$ & $0.91 \pm 0.02$ & $0.91 \pm 0.02$ & $0.91 \pm 0.02$ &$0.91 \pm 0.02$ \\ \midrule
                        
\parbox[t]{2mm}{\multirow{5}{*}{\rotatebox[origin=c]{90}{Amazon2M}}}  &
\pkg            & $0.30 \pm 0.05$                   & $0.42 \pm 0.10$                   & $0.60 \pm 0.18$                   & $0.66 \pm 0.08$                   & $0.70 \pm 0.19$        \\
                        & \patem           & $0.10 \pm 0.04$                   & $0.28 \pm 0.02$                   & $0.41 \pm 0.02$                   & $0.44 \pm 0.01$                   & $0.47 \pm 0.01$        \\
                        & \pateg           & $0.02 \pm 0.01$                   & $0.02 \pm 0.01$                   & $0.03 \pm 0.03$                   & $0.04 \pm 0.02$                   & $0.04 \pm 0.03$        \\
                        & B1 &$0.62 \pm 0.01$ & $0.62 \pm 0.01$ & $0.62 \pm 0.01$ & $0.62 \pm 0.01$ &$0.62 \pm 0.01$ \\
					& B2 &$0.78 \pm 0.01$ & $0.78 \pm 0.01$ & $0.78 \pm 0.01$      & $0.78 \pm 0.01$ &$0.78 \pm 0.01$ \\ \midrule
                        
\parbox[t]{2mm}{\multirow{5}{*}{\rotatebox[origin=c]{90}{Reddit}}} & \pkg    & $0.35 \pm 0.09$                   & $0.61 \pm 0.18$                   & $0.80 \pm 0.10$                    & $0.86 \pm 0.13$                   & $0.88 \pm 0.07$        \\
                        & \patem           & $0.16 \pm 0.01$                   & $0.26 \pm 0.01$                   & $0.37 \pm 0.01$                   & $0.42 \pm 0.03$                   & $0.43 \pm 0.05$        \\
                        & \pateg           & $0.03 \pm 0.03$                   & $0.03 \pm 0.03$                   & $0.06 \pm 0.02$                   & $0.04 \pm 0.05$                   & $0.08 \pm 0.04$  \\ 
                        & B1 &$0.83 \pm 0.01$ & $0.83 \pm 0.01$ & $0.83 \pm 0.01$ & $0.83 \pm 0.01$ &$0.83 \pm 0.01$ \\
					& B2 &$0.92 \pm 0.01$ & $0.92 \pm 0.01$ & $0.92 \pm 0.01$      & $0.92 \pm 0.01$ &$0.92 \pm 0.01$ \\ \midrule

\parbox[t]{2mm}{\multirow{5}{*}{\rotatebox[origin=c]{90}{Facebook}}} & \pkg    & $0.24 \pm 0.08$                   & $0.45 \pm 0.07$                   & $0.60 \pm 0.15$                    & $0.72 \pm 0.11$                   & $0.80 \pm 0.11$        \\
                        & \patem           & $0.03 \pm 0.04$                   & $0.10 \pm 0.05$                   & $0.25 \pm 0.08$                   & $0.37 \pm 0.03$                   & $0.38 \pm 0.03$        \\
                        & \pateg           & $0.02 \pm 0.03$                   & $0.02 \pm 0.04$                   & $0.03 \pm 0.04$                   & $0.04 \pm 0.03$                   & $0.10 \pm 0.03$  \\ 
                        & B1 &$0.69 \pm 0.01$ & $0.69 \pm 0.01$ & $0.69 \pm 0.01$ & $0.69 \pm 0.01$ &$0.69 \pm 0.01$ \\
					& B2 &$0.80 \pm 0.01$ & $0.80 \pm 0.01$ & $0.80 \pm 0.01$      & $0.80 \pm 0.01$ &$0.80 \pm 0.01$ \\ \bottomrule
\end{tabular}
\end{table}

\begin{table}[htbp]
\caption{Mean accuracy of varying the number of queries $|Q|$. 
 }
 \centering
\setlength{\tabcolsep}{2.9pt}
\begin{tabular}{lllllllll*{9}{S[table-format=1.2,table-number-alignment=left,round-precision=2,round-mode=places]}} \toprule
&\multicolumn{2}{c}{\textbf{Amazon}}
&\multicolumn{2}{c}{\textbf{Amazon2M}} &\multicolumn{2}{c}{\textbf{Reddit}} &\multicolumn{2}{c}{\textbf{Facebook}} \\ \cmidrule(lr){2-3} \cmidrule(lr){4-5} \cmidrule(lr){6-7} \cmidrule(lr){8-9}

{$ \nicefrac{\lambda}{|Q|}$} & {500}              & {1000}              & {500}        & {1000}      & {500}        & {1000}     & {500}        & {1000}      \\ \midrule
0.1                   & 0.31             & 0.4              & 0.22       & 0.30      & 0.28      & 0.35        & 0.18       & 0.24    \\
0.2                   & 0.54             & 0.64             & 0.37       & 0.42      & 0.45      & 0.61       & 0.35       & 0.45     \\
0.4                   & 0.76             & 0.8              & 0.51       & 0.60      & 0.75      & 0.80       & 0.56       & 0.60      \\
0.8                   & 0.79             & 0.82             & 0.55       & 0.66      & 0.79      & 0.86       & 0.68       & 0.72  \\
1.0                     & 0.77             & 0.83             & 0.64        & 0.70      & 0.83      & 0.88       & 0.76       & 0.80   \\ \bottomrule
\end{tabular}
\label{tab:varyingquery}
\end{table}

\section{Detailed Proof}
\label{sec:appendix_theory}

Here, we provide missing theoretical details and the full proof of Theorem \ref{thm:mainresult}. In the following, we motivate the use of private data subsampling to achieve a reduction in the privacy budget. We use Theorem  \ref{theo:upperbound} in our analysis to bound the privacy budget (in the RDP framework) of our Poisson subsampled Laplacian mechanism.
\subsection{Privacy Amplification by Subsampling}
\label{sec:subsampling}
A commonly used approach in privacy is \emph{subsampling} in which the DP mechanism is applied to the randomly selected sample from the data. Subsampling offers a stronger privacy guarantee in that the one data point that differs between two neighboring datasets has a decreased probability of appearing in the smaller sample.  
That is, when we apply an $(\varepsilon, \delta)$-DP mechanism to a random $\gamma$-subset of the data, the entire procedure satisfies $(O(\gamma\varepsilon), \gamma\delta)$-DP.
The intuitive notion of amplifying privacy by subsampling is that the privacy guarantee of the DP mechanism is tighter by applying it to a small random subsample of records from a given dataset. This is also referred to as the \textit{subsampling lemma} in the literature \cite{balle2018privacy}. 
Under some restrictions on $\alpha$, we can represent the combination of the subsampling lemma, and the tight advanced composition of RDP \cite{wang2019subsampled, zhu2019poission} as:
$\varepsilon_{\mathcal{M} o Sample_\gamma} (\alpha) \leq O(\gamma^2 \varepsilon_{\mathcal{M}}(\alpha)).$
In this paper, we apply the Poisson subsampled RDP-amplification bound from \citet{zhu2019poission}.

\begin{theorem}[General upper bound \cite{zhu2019poission}]\label{theo:upperbound} Let $\mathcal{M}$ be any mechanism that obeys $(\alpha, \varepsilon(\alpha))$-RDP. Let $\gamma$ be the subsampling probability, then  for integer $\alpha \geq 2$, the privacy budget is

\begin{align}
\begin{split}
\label{eq:subsampleRDP}
    \varepsilon_{MoPoissonSample}(\alpha) \leq  \frac{1}{\alpha - 1} \log \Bigg\{ &(1 - \gamma)^{\alpha-1} (\alpha\gamma - \gamma + 1) 
+ 
\left(
\begin{matrix}
    \alpha \\ 2
  \end{matrix}
  \right) 
  \gamma^2 (1 - \gamma)^{\alpha-2} e^{\varepsilon(2)} \\
+ 3\sum_{\ell = 3}^{\alpha}
  \left(
  \begin{matrix}
    \alpha \\ \ell
  \end{matrix}
  \right)
  (1 - \gamma)^{\alpha-\ell} \gamma^\ell e ^{(\ell-1)\varepsilon(\ell)}
\Bigg\}.\nonumber
\end{split}
\end{align}

\end{theorem}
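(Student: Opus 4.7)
The plan is to bound the \renyi divergence $D_\alpha(p \| q)$ where $p = \mathcal{M} \circ \text{PoissonSample}(X')$ and $q = \mathcal{M} \circ \text{PoissonSample}(X)$ for neighboring datasets $X' = X \cup \{x^*\}$, which by Definition~\ref{def:rdp} reduces to upper-bounding $E_{\theta \sim q}[(p(\theta)/q(\theta))^\alpha]$ by the expression inside the logarithm. The starting point is to exploit the independence structure of Poisson sampling: since every element of $X$ is included in the sample with probability $\gamma$ whether or not $x^*$ is present, conditioning on the coin for $x^*$ gives the exact decomposition
\begin{align*}
p \;=\; (1-\gamma)\, q \;+\; \gamma\, \tilde q,
\end{align*}
where $\tilde q(\theta) := E_{S \sim \text{PoissonSample}(X)}[\mathcal{M}(S \cup \{x^*\})(\theta)]$ is the conditional output law given that $x^*$ was sampled.

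Writing $p/q = (1-\gamma) + \gamma(\tilde q/q)$ and expanding by the binomial theorem,
\begin{align*}
E_q\!\left[(p/q)^\alpha\right] \;=\; \sum_{\ell=0}^{\alpha} \binom{\alpha}{\ell}(1-\gamma)^{\alpha-\ell}\gamma^\ell\, \mu_\ell, \qquad \mu_\ell := E_q[(\tilde q/q)^\ell].
\end{align*}
The $\ell = 0$ and $\ell = 1$ contributions use only the identities $\mu_0 = 1$ and $\mu_1 = \int \tilde q = 1$, and they collapse to $(1-\gamma)^\alpha + \alpha\gamma(1-\gamma)^{\alpha-1} = (1-\gamma)^{\alpha-1}(\alpha\gamma - \gamma + 1)$, reproducing the first summand of the theorem. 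The whole remaining problem is to bound the moments $\mu_\ell$ for $\ell \geq 2$ in terms of $e^{(\ell-1)\varepsilon(\ell)}$.

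For $\ell = 2$ this is routine: the $\chi^2$-divergence is jointly convex, and since $\tilde q$ and $q$ are expectations over a common sample $S$, Jensen's inequality gives $\mu_2 - 1 = \chi^2(\tilde q \| q) \leq E_S[\chi^2(\mathcal{M}(S \cup \{x^*\}) \| \mathcal{M}(S))]$. The pair $(S, S \cup \{x^*\})$ is neighboring for every realization of $S$, so the order-$2$ RDP hypothesis yields $\mu_2 \leq e^{\varepsilon(2)}$, matching the second summand. The difficulty I expect, and indeed the technical crux of the whole bound, is the case $\ell \geq 3$: here the \renyi divergence is \emph{not} jointly convex, so the Jensen argument fails and a naive mixture argument gives nothing useful.

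To handle $\ell \geq 3$, the plan is to decompose $\tilde q/q = 1 + (\tilde q - q)/q$ and expand $(\tilde q/q)^\ell$ binomially, reducing $\mu_\ell$ to a finite sum of mixed moments of the form $E_{\mathcal{M}(S)}\!\left[\bigl(\mathcal{M}(S \cup \{x^*\})/\mathcal{M}(S)\bigr)^{k}\right]$ for $k \leq \ell$, each of which is controlled by the order-$\ell$ RDP hypothesis together with monotonicity of $\varepsilon(\cdot)$ in the order. A careful H\"older-type accounting of the lower-order correction terms should absorb all of them into a single prefactor of $3$, giving $\mu_\ell \leq 3\, e^{(\ell-1)\varepsilon(\ell)}$. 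Substituting the three bounds on $\mu_\ell$ back into the binomial expansion, taking logarithms and dividing by $\alpha - 1$, we then recover exactly the claimed inequality; the main obstacle throughout is producing the $\ell \geq 3$ moment bound tightly enough that the binomial sum still collapses to the stated form.
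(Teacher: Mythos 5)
First, note that the paper itself does not prove this statement: Theorem~\ref{theo:upperbound} is imported verbatim from \cite{zhu2019poission} and used as a black box, so there is no in-paper proof to compare against and your attempt has to be measured against the original argument of Zhu and Wang. Your skeleton does match the first half of that argument: conditioning on the inclusion coin of $x^*$ to get $p=(1-\gamma)q+\gamma\tilde q$, the binomial expansion with moments $\mu_\ell=E_q[(\tilde q/q)^\ell]$, the collapse of the $\ell\in\{0,1\}$ terms to $(1-\gamma)^{\alpha-1}(\alpha\gamma-\gamma+1)$, and the Jensen bound $\mu_2\le e^{\varepsilon(2)}$ are all correct and are exactly how the proof begins.

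There are two genuine problems with the rest. First, you have misdiagnosed the obstacle at $\ell\ge3$: although the \renyi divergence $D_\ell$ itself is not jointly convex, the quantity you actually need, $\mu_\ell=\int\tilde q^{\ell}q^{1-\ell}$, is governed by the functional $(a,b)\mapsto a^{\ell}b^{1-\ell}$, which is the perspective of the convex map $t\mapsto t^{\ell}$ and hence jointly convex for every $\ell\ge1$. The same Jensen step you use at $\ell=2$ therefore gives $\mu_\ell\le E_S\bigl[e^{(\ell-1)D_\ell(\mathcal{M}(S\cup\{x^*\})\,\|\,\mathcal{M}(S))}\bigr]\le e^{(\ell-1)\varepsilon(\ell)}$ for all $\ell$, with no factor of $3$ needed; your proposed re-expansion with ``H\"older-type accounting'' is both unnecessary in this direction and, as sketched, would not obviously produce the constant $3$ (triangle-inequality bookkeeping of the centered moments tends to give $2^{\ell}$-type constants instead). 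Second, and more seriously, you treat only one ordering of the neighboring pair. RDP requires bounding $D_\alpha(q\|p)$ as well, and in that direction the mixture $p=(1-\gamma)q+\gamma\tilde q$ sits in the \emph{denominator} of the likelihood ratio, so the binomial expansion and the convexity argument no longer apply; this ``remove'' direction is the actual technical crux of the proof in \cite{zhu2019poission} and is where the slack --- including the constant $3$ and the precise shape of the first summand --- is paid. As it stands, your argument establishes a tighter one-sided bound but is silent on the side that makes the stated theorem nontrivial.
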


\subsection{Proof of our Privacy Guarantee}
\label{sec:proof}
\begin{proof}[Proof of Theorem \ref{thm:mainresult}]
Our algorithm is composed of (i) a sampling mechanism in which a small sample of private data is used to train the private GNN model and (ii) a Laplacian mechanism (with scale parameter $\beta$), which is used to generate a noisy label for the public node (queried on corresponding private GNN). 

First, note that as the $L_1$ norm of the posterior is bounded by $1$, we add independent Laplacian noise to each posterior element with scale ${\beta}$ (Note that each posterior element is also bounded by 1). The computation of noisy label for each public query node is, therefore,
$\nicefrac{1}{\beta}$-DP. The sequential composition \cite{dwork2006calibrating} over $N$ queries will result in a crude bound over the DP guarantee of our approach, namely $\nicefrac{N}{\beta}$. To obtain a tighter bound that takes  the effect of the private data subsampling into account, we perform the transformation of the privacy variables using the RDP formula for the Laplacian mechanism for $\alpha > 1$

\begin{align}
        \varepsilon_{\Lap}(\alpha) = &\frac{1}{\alpha -1} \log\left(\left(\frac{\alpha}{2\alpha -1}\right) e^{\frac{\alpha -1}{\beta}} + \left(\frac{\alpha -1}{2\alpha -1}\right)e^{\frac{-\alpha}{\beta}} \right).
    \end{align}
Moreover, the model uses only a random sample of the data to select the nodes for training the private model. We, therefore, apply the tight advanced composition of Theorem \ref{theo:upperbound} to obtain an upper bound of $\varepsilon_{\Lap o \Po}(\alpha)$. To simplify the expression, we set $\alpha=2$ in the corresponding bound and obtain
\begin{equation}
\label{eq:lappos2}
\varepsilon_{\Lap o \Po}(2) \leq  \log \left( 1 - \gamma^2  
+ \gamma^2  e^{\varepsilon_\Lap(2)}
\right).
\end{equation}
Substituting $\varepsilon_{\Lap}(2)$ in \eqref{eq:lappos2}, we obtain
\begin{equation}
    \label{eq:lappos}
    \varepsilon_{\Lap o \Po}(2)=  \log\left(1-\gamma^2 + \gamma^2\left({2\over 3} \cdot  e^{1/\beta} + {1\over 3}\cdot e^{-2/\beta}\right)\right).
\end{equation}
Now applying the advanced composition for RDP (Lemma \ref{lem:rdpcomposition}) for $|Q|$ queries, we get the upper bound on the total privacy loss of our approach at $\alpha=2$
$$ \varepsilon_{\pkg}(2)\le |Q| \log\left(1-\gamma^2 + \gamma^2\left({2\over 3} \cdot  e^{1/\beta} + {1\over 3}\cdot e^{-2/\beta}\right)\right).$$

For any given $\delta>0$, we use Lemma \ref{lem:rdptoepsdeltadp} and  Eq.~\eqref{eqn:deltatoeps} to obtain the $(\varepsilon,\delta)$-DP guarantee. Specifically for any $\delta>0$ we obtain 
\begin{align}
\varepsilon(\delta) = \min_{\alpha>1} \frac{\log(1/\delta)}{\alpha - 1} + \varepsilon_{\mathcal{\pkg}}(\alpha - 1).    
\end{align}
Substituting $\alpha=3$ in the above we obtain the stated upper bound, i.e.,
\begin{align*}
    \varepsilon(\delta) \le &\log\left({1\over\sqrt{\delta}}\right) + 
    |Q|\log{\left( 1+\gamma^2  \left({2\over 3}e^{1/\beta} + {1\over 3} e^{-2/\beta}-1 \right) \right)},
\end{align*}
thereby completing the proof. 
\end{proof}

\section{PrivGNN and Membership Inference Attacks}
\label{sec:miattackpa}
In addition to the theoretical guarantee of differential privacy, here, we empirically test the robustness of \pkg against membership inference (MI) attacks. In particular, we apply the attack of \citet{olatunji2021membership} in two different settings. In the first setting (Attack-1), the goal of the adversary is to distinguish private nodes from public nodes. In contrast, in the second setting (Attack-2), the goal of the adversary is to distinguish the public nodes which were privately labeled  from the unlabeled public nodes.

Overall our two MI attacks consist of three phases: the training of a shadow model, the training of an attack model, and the membership inference phase, which involves using the trained attack model to infer the membership status of any given query from the target (attacked) model. In summary, we first generate posteriors for the shadow dataset using the shadow model. This generated data is then used to train the attack model, which is a binary classifier that distinguishes between the training and non-training nodes. 
For more details about the MI attack, we refer the reader to \cite{olatunji2021membership} and their publicly available implementation.

In the following, we describe our two attack variants and the corresponding results: Attack-1 aims at determining the private nodes (the original training data), and Attack-2 aims to determine the pseudo-labeled public nodes (the noisy training set for the public student model). For all the attacks, we select the student model with the largest $\lambda = 1$, which corresponds to a lower privacy guarantee and the highest information leakage based on the achieved $\epsilon$ as the target model.

\subsection{Identifying Private Nodes via MI (Attack-1) }
\label{sec:miattackprivatenodes}

\begin{figure}[h!]
\centering
\begin{subfigure}[b]{0.49\linewidth}
\includegraphics[width=\textwidth]{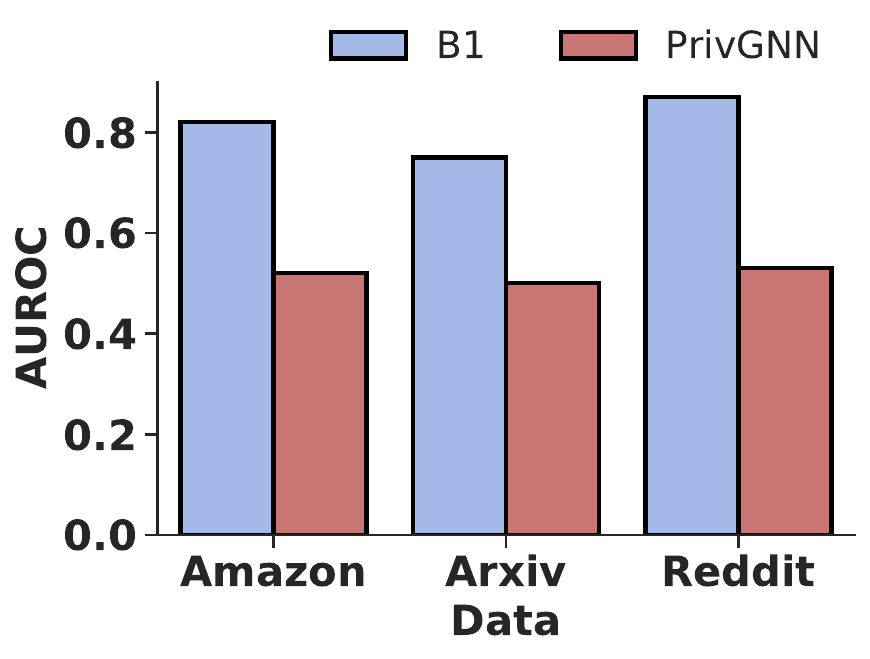}
\caption{MI Attack-1.}
\label{fig:miattackprivnodes}
\end{subfigure}
\hfill
\begin{subfigure}[b]{0.49\linewidth}
\includegraphics[width=\textwidth]{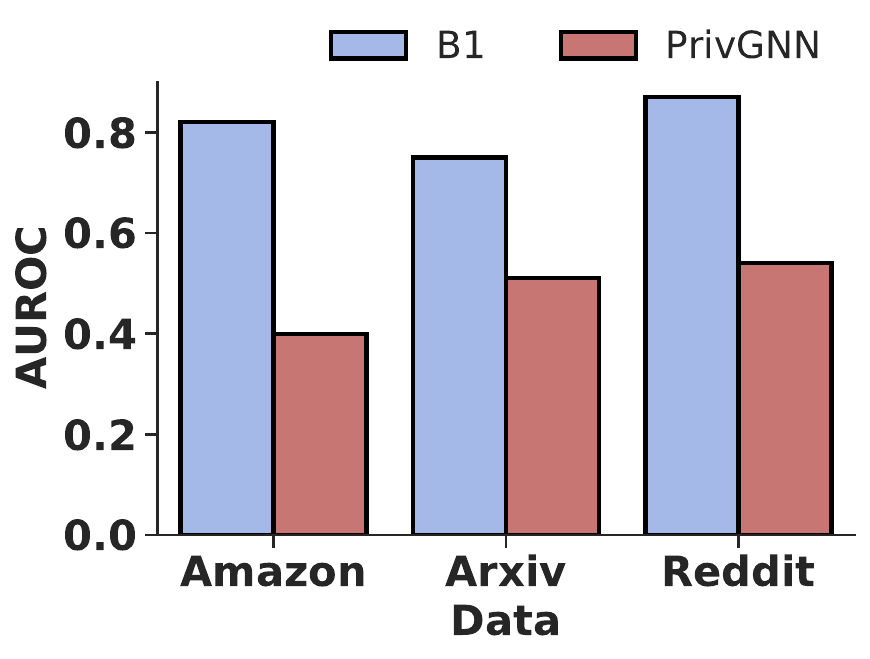}
\caption{MI Attack-2.}
\label{fig:miattackpublicprivlabel}
\end{subfigure}
\caption{Performance of MI Attack-1 and Attack-2 on \pkg. B1 refers to the MI attack on the non-private GNN model.}
\label{fig:miattack1and2}
\end{figure}

Here, we assume that the attacker does not know the training paradigm of the released student model but is aware of the architecture of the released model (target model). She wants to infer the private nodes used for training from the dataset available to her (for example, by some means, she was able to get hold of the Facebook social network).
After training her attack model, the attacker queries the target model with nodes of interest and inputs her posteriors into her trained attack model to infer membership. 

We compare the performance of the attack on a non-private model (B1) and \pkg. As shown in Figure \ref{fig:miattackprivnodes}, in the non-private model, the attacker can accurately identify private nodes which were used in training the target GNN model with high AUROC ($> 0.75$) on all datasets. However, the performance of that attack on \pkg is reduced to a random guess ($\leq 0.54$).

\subsection{Identifying Private Nodes by Proxy via MI (Attack-2)}
\label{sec:miattackpubliclabel}

For this attack, we assume that the attacker knows that the released student model (target model) is trained using the knowledge distillation paradigm. Therefore, for such a strong attacker, her goal is to determine the public nodes that were privately labeled and used by \pkg in training the target model from the other public nodes. The rationale is that if an attacker can confidently determine the public nodes that are privately (noisily) labeled, then it might reveal some information about the private dataset. For instance, consider an attacker that is aware that \pkg utilizes \textit{KNN} in selecting the private nodes used in labeling a public node. If she determines such a public node that was privately labeled, she might infer private information of up to $K$ private nodes assuming that she has access to the entire graph but does not know which is private or public. Therefore, the private nodes are said to be identified by "proxy".

As shown in Figure \ref{fig:miattackpublicprivlabel}, our \pkg method reduces the MI attack to a random guess achieving an AUROC of 0.40, 0.51, and 0.54 on the Amazon, ArXiv, and Reddit datasets, respectively, as compared to the non-private MI attack (corresponding to the MI attack on Baseline B1) with AUROC of 0.82, 0.75, and 0.87. This is because of the two noise mechanisms that \pkg adopts. Specifically, the random subsampling limits the private data used for the KNN procedure. Moreover, the knowledge transferred to the student model is limited due to noisy pseudo-labels.

\section{Additional Experiments}
\label{sec:addexperiments}
\subsection{Impact of Highly Informative Node Features}
\label{sec:informative_feature}
Here, we use two datasets, ArxiV and Credit, whose features are already highly informative for the task. To verify this, we first train the datasets on a three-layer MLP using only the features, and we obtained relatively high accuracy as compared to B2, which in addition uses the graph structure, as shown in Table \ref{tab:mlpcredit-arxiv}.

\begin{wraptable}{r}{5.5cm}
\centering
\caption{MLP accuracy of Credit and ArXiv. B2 utilizes the GNN model}
\label{tab:mlpcredit-arxiv}
\begin{tabular}{lll} \toprule
\textbf{} & \textbf{Credit} & \textbf{ArXiv} \\ \midrule
\textbf{MLP}        & 0.78         & 0.55        \\
\textbf{B2}         & 0.81         & 0.63       \\ \bottomrule
\end{tabular}
\end{wraptable}
  
As shown in Figure \ref{privacyutilitycrearx}, on both datasets, the performance of \patem is on par with \pkg. 
On the ArXiv dataset, we achieved a 15\% decrease over \patem (when $\lambda=0.2$). Note that the teacher models of \patem are MLPs and do not utilize graph structure. The observed result is probably because the average node degree of ArXiv is very small ($\leq 2$). Thus, GNNs, which use the aggregation of the neighbors of a node to make predictions, might not benefit from such low-degree graphs. However, against \pateg, we achieved a 1169\% improvement in performance which uses graph structure. Recall that \pateg teacher models are queried on graph data when they are trained like MLP (due to the low connectivity among training nodes), which explains its worse performance than \patem.
Moreover, on the Credit dataset, the privacy budget is very low, indicating high agreement among the teacher models in \patem. We also observe that \pateg performs better on this dataset which further supports our argument that the feature is highly informative as compared to the performance on other datasets in Figure \ref{privacyutility}. We will like to emphasize that the drop in the privacy budget of the Credit dataset as less noise is added is not surprising. This is due to the data-dependent analysis that \patem utilizes (see \cite{papernot2016semi}).

\begin{figure}[!ht]
\centering
\begin{subfigure}{\linewidth}
\includegraphics[width=0.9\linewidth]{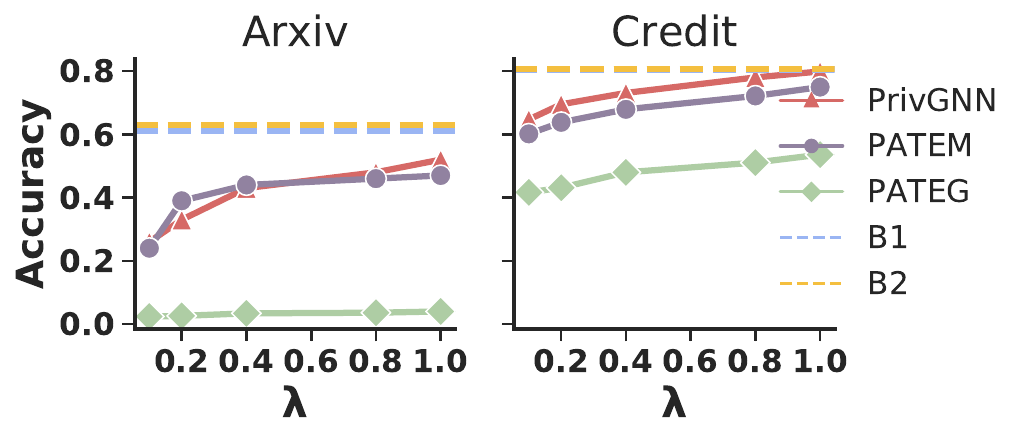}
\caption{Accuracy vs. Noise ($\propto \nicefrac{1}{\lambda}$) injected to each query. }
\label{fig:privutilcrearx}
\end{subfigure}
\begin{subfigure}{\linewidth}
\includegraphics[width=0.9\linewidth]{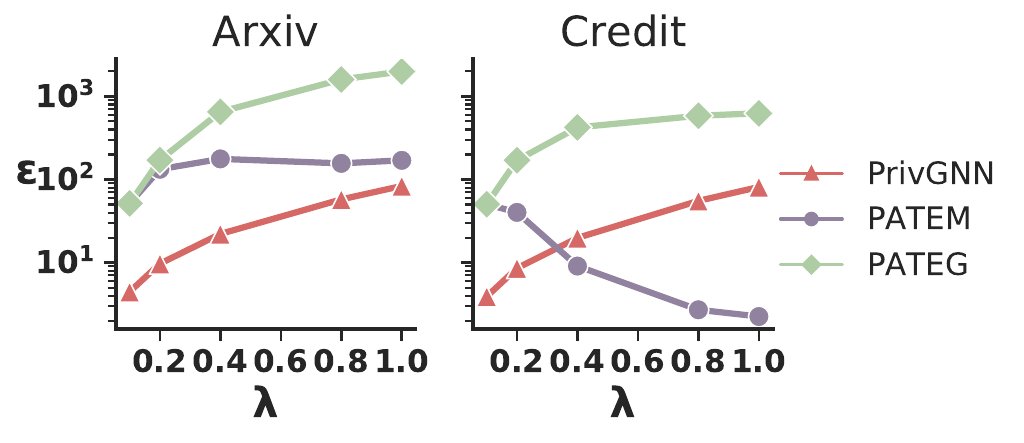}
\caption{Privacy budget vs. Noise ($\propto \nicefrac{1}{\lambda}$) injected to each query.}
\label{fig:epsbudgetcrearx}
\end{subfigure}
\caption{\label{privacyutilitycrearx} Privacy-utility analysis of the impact of highly informative node features. Here, $|Q|$ is set to 1000. For \pkg, $\gamma$ is set to $0.3$.
}
\end{figure}

\subsection{Effect of the Size of the Public Dataset}
\label{sec:sensitivity}

\begin{figure}
\centering
\includegraphics[width=1\linewidth]{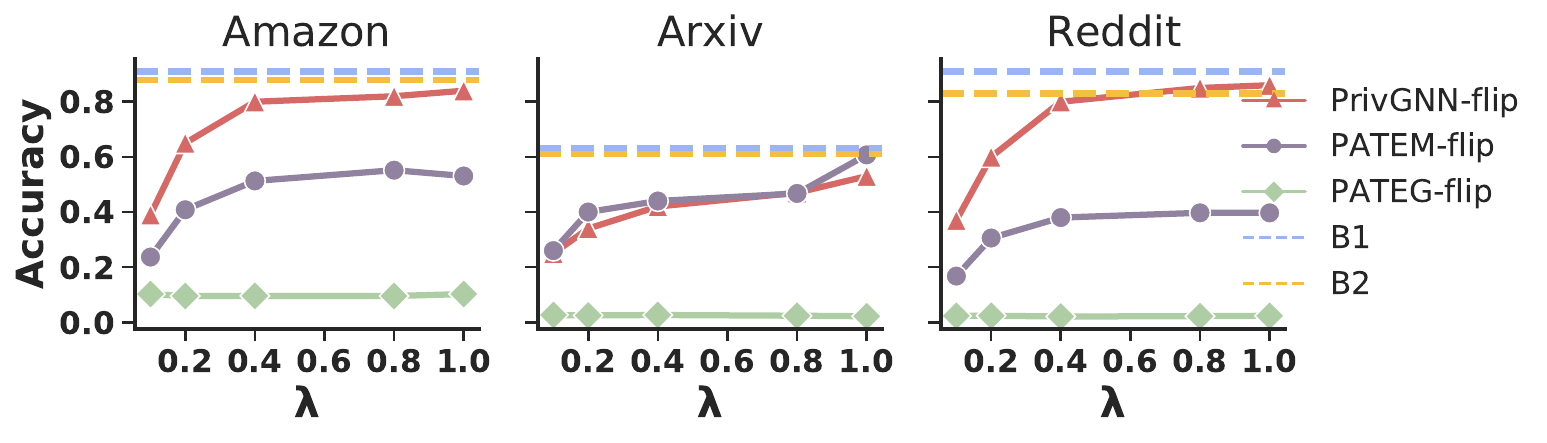}
\caption{Performance of different models after flipping the public and private datasets.}
\label{fig:flipdata}
\end{figure}

To investigate the effect of the size of the public dataset on the performance of \pkg, we flip the public and private portions corresponding to different datasets. For instance, instead of having $2500$ nodes in the private dataset as shown for Amazon in Table \ref{tab:datastat}, we have $6000$ nodes for the private dataset and $2500$ for the public dataset. As shown in Figure \ref{fig:flipdata}, there are no significant differences in the results of our model when there is a larger public dataset available (c.f Figure \ref{fig:privutil}). For a fair comparison, we run all baselines, including \patem and \pateg, on the flipped data. \pkg outperforms all private baselines on the flipped datasets. As expected, the result of the non-private baselines (B1 and B2) are interchanged. We observe that our model is not sensitive to the size of the private or public datasets. Therefore, our method is suitable in settings where the available private or public data is large and in settings with a limited private or public dataset.

\subsection{Real-world Scenario / Motivation}
\label{sec:motivating-example}
Our framework has a range of applications in real-world scenarios where public and private graphs exist simultaneously. One example of such a scenario is the Wikidata \citep{vrandevcic2014wikidata}, which is a publicly available knowledge graph that is constructed using Wikipedia's content and can be easily downloaded. In contrast, the Google knowledge graph \citep{singhal-2012} (which is a superset of Wikidata) is a private knowledge graph that is not openly accessible to the public. The implementation details of the Google knowledge graph are not officially documented, and the raw data is not available to anyone \cite{ehrlinger2016towards}. The Google knowledge graph can only be accessed through an API, which provides black-box access. However, it's worth noting that the Google knowledge graph is partly based on the contents of Freebase, which was later acquired by Wikidata \cite{pellissier2016freebase}. Therefore, both public and private graph data exist in real-world scenarios and can potentially be utilized by the owner of private data (in this case Google) to train and deploy a private GNN model.

Similarly, in finance, there may be sensitive financial information that needs to be kept private, but combining this information with public economic data can lead to valuable insights for investors and policymakers. 

\subsubsection{Application Example}
One motivation for our approach is the practical content labeling problem \cite{morrow2022emerging, saltz2021encounters, gao2018label}. Content labeling is a form of content moderation where labels are assigned to each content. Content labeling helps deal with misinformation, conspiracy theories, and misleading content (disinformation) that may affect day-to-day activities like voting, health behaviors, and promoting hate speech.  

As an example, social network companies like Facebook have millions of users(nodes) and connections(edges). The profiles of some users are public and can be manually curated. For a specific snapshot (a point in time), Facebook assigns attributes (e.g., labels) to a subset of the users based on their contents or activities on the social network to identify bots and real users to tackle disinformation. Furthermore, Facebook could release a content labeling model trained on their private data (e.g., as an API) without endangering the participants' privacy or releasing trade secrets. 

If a model trained on private data, such as a content labeling model, is released without any privacy protection, it could potentially violate intellectual property or trade secrets as manually annotating/assigning attributes to data can be expensive. Moreover, such a release could also put the privacy of participants at risk by enabling attacks like membership inference attacks. Therefore, it is crucial to have a framework that combines both public and private information while accounting for the classic adversarial game where the adversary tries to recover private data from the model that is released publicly using data from a similar distribution, like a different snapshot of Facebook.

With tech companies competing to gain a competitive advantage through the use of advanced machine learning and artificial intelligence techniques, there is an arms race to launch AI models trained on their collected data. However, this has also raised concerns about the privacy of the individuals whose data is being used, as well as the potential for misuse of the models by those who have access to them. Our approach has great potential to be adopted by such companies looking to balance the need for advanced machine learning techniques with the need to protect the privacy of their users' data.

\subsection{Joint Training of the Private and Public Graph}
\label{sec:joint-training}

\begin{wraptable}{R}{8cm}
\setlength{\tabcolsep}{1.5pt}
\caption{Performance of all baselines including the joint training baseline (B3).}
\label{tab:all-baselines}

\centering
\begin{tabular}{p{2cm}p{2cm}p{2cm}p{1.7cm}l} \toprule

{Data}  & \textbf{B1} & \textbf{B2} & \textbf{B3} &  \\ \midrule
{Amazon} & $0.88 \pm 0.02$                    & $0.91 \pm 0.02$                  & $0.93 \pm 0.02$ \\\midrule
{Amazon2M} & $0.62 \pm 0.01$                    & $0.78 \pm 0.01$                  & $0.78 \pm 0.02$ \\\midrule
{ArXiv} & $0.61 \pm 0.01$                    & $0.63 \pm 0.01$                  & $0.65 \pm 0.01$ \\\midrule
{Reddit} & $0.83 \pm 0.01$                    & $0.92 \pm 0.01$                  & $0.92 \pm 0.02$ \\\midrule
{Facebook} & $0.69 \pm 0.01$                    & $0.80 \pm 0.01$                  & $0.83 \pm 0.02$ \\\midrule
{Credit} & $0.81 \pm 0.01$                    & $0.81 \pm 0.01$                  & $0.81 \pm 0.01$ \\\bottomrule

\end{tabular}
\end{wraptable}

As an additional non-private baseline, we conduct a joint training on both the private and public graphs, which we refer to as baseline \textbf{B3}. A single GNN is trained on with all nodes in the private graph and the train nodes of the public graph. Note that all the edges of both the private and public graph are used for training. The model is evaluated on the test set of the public graph, which is the same test set used for all experiments and baselines.

The results of the joint training (B3) and B2 are comparable, as shown in Table \ref{tab:all-baselines}, with some cases being the same. This joint training supports our hypothesis that the B2 accuracy is the ``best possible'' estimate of the graph performance (see Baselines in Section \ref{sec:experiment-setup}).

\subsection{Adding Noise Directly to the Groundtruth Label of Query Nodes}
\label{sec:noise-directly}
To further evaluate the performance of \pkg against a noisy "oracle", we add Laplace noise directly to the groundtruth label of each query nodes. We call this baseline \textbf{B4}. We note that this baseline is not feasible in our setting since the groundtruth label of the public model is not obtainable. Hence, a noisy oracle. The procedure is as follows. We first represent the true groundtruth label of the query nodes as a one-hot encoded vector of all zeros except in the position of the groundtruth label. We then add Laplacian noise to the one-hot encoded vector. We set the negative values of the noisy label vector to zero and normalize the obtained vector.
The normalized label vector for each query is then used in training the student model. We show the privacy-utility tradeoff of \pkg and B4 in Table \ref{tab:noisedirectly-privutil}. The accuracy of B4 is not surprising since noise is directly added to the true groundtruth labels. On all datasets, the performance of B4 is similar or slightly better than \pkg. However, the extremely high privacy budget of B4 renders it useless. Such crude approach, although unrealistic, can achieve higher performance but also at a higher value of $\epsilon$.

\begin{table}[htbp]
\begin{center}
\begin{small}
\begin{sc}
\caption{Privacy-utility tradeoff of \pkg and B4 (adding noise directly to the groundtruth output). Note that B4 does not change with $\lambda$.}
\label{tab:noisedirectly-privutil}

\centering
\small
\setlength{\tabcolsep}{3pt}
\settowidth\rotheadsize{Amazon}
\begin{tabular}{@{}lllccccc*{5}{S[table-format=3.2,table-number-alignment=right,round-precision=2,round-mode=places]}} \toprule
&   & \textbf{$\lambda \rightarrow$} & {$0.1$}   & {$0.2$}    & {$0.4$}    & {$0.8$}     & {$1.0$}       \\ \midrule
\multirow{4}{*}{ \rotcell{Amazon}}& \multirow{2}{*}{Accuracy} & 
\pkg & 0.40 $\pm$ 0.08  & 0.64 $\pm$ 0.07 & 0.80 $\pm$ 0.13  & 0.82 $\pm$ 0.17 & 0.83 $\pm$ 0.10 \\ 
& & B4 & 0.42 $\pm$ 0.05 & 0.65 $\pm$ 0.04 & 0.82 $\pm$ 0.03 & 0.84 $\pm$ 0.05 & 0.85 $\pm$ 0.07 \\ \cmidrule(lr){2-8}

& \multirow{2}{*}{$\epsilon$} 
& \pkg & 3.90   & 8.53   & 19.81  & 55.3    & 81.23   \\
& & B4 & 100 & 200 & 300 & 400 & 500  \\ \midrule

\multirow{4}{*}{ \rotcell{Amazon2M}}& \multirow{2}{*}{Accuracy} 
& \pkg & 0.30 $\pm$ 0.05 & 0.42 $\pm$ 0.10 & 0.60 $\pm$ 0.18 & 0.66 $\pm$ 0.08 & 0.70 $\pm$ 0.19  \\
& & B4 & 0.32 $\pm$ 0.04 & 0.46 $\pm$ 0.05 & 0.63 $\pm$ 0.07 & 0.68 $\pm$ 0.04 & 0.77 $\pm$ 0.02 \\ \cmidrule(lr){2-8}

& \multirow{2}{*}{$\epsilon$} 
& \pkg & 1.38  & 2.82   & 5.94  & 12.97    & 17.73   \\
& & B4 & 100 & 200 & 300 & 400 & 500  \\ \midrule

\multirow{4}{*}{ \rotcell{ArXiv}}& \multirow{2}{*}{Accuracy} 
& \pkg & 0.26 $\pm$ 0.03  & 0.33 $\pm$ 0.02  & 0.43 $\pm$ 0.05    & 0.48 $\pm$ 0.04 & 0.52 $\pm$ 0.05 \\
& & B4 & 0.28 $\pm$ 0.02 & 0.38 $\pm$ 0.01 & 0.46 $\pm$ 0.03 & 0.51 $\pm$ 0.02 & 0.54 $\pm$ 0.04 \\ \cmidrule(lr){2-8}

& \multirow{2}{*}{$\epsilon$} 
& \pkg & 4.45   & 9.69   & 22.11  & 57.60    & 83.53   \\
& & B4 & 100 & 200 & 300 & 400 & 500  \\ \midrule

\multirow{4}{*}{ \rotcell{Reddit}}& \multirow{2}{*}{Accuracy} 
& \pkg & 0.35 $\pm$ 0.09 & 0.61 $\pm$ 0.18 & 0.80 $\pm$ 0.10  & 0.86 $\pm$ 0.13 & 0.88 $\pm$ 0.07  \\
& & B4 & 0.35 $\pm$ 0.04 & 0.63 $\pm$ 0.03 & 0.82 $\pm$ 0.04 & 0.87 $\pm$ 0.02 & 0.88 $\pm$ 0.03 \\ \cmidrule(lr){2-8}

& \multirow{2}{*}{$\epsilon$} 
& \pkg & 3.90   & 8.53   & 19.81  & 55.3    & 81.23   \\
& & B4 & 100 & 200 & 300 & 400 & 500  \\ \midrule

 \multirow{4}{*}{ \rotcell{Facebook}}& \multirow{2}{*}{Accuracy} 
 & \pkg & 0.24 $\pm$ 0.08 & 0.45 $\pm$ 0.07 & 0.60 $\pm$ 0.15  & 0.72 $\pm$ 0.11 & 0.80 $\pm$ 0.11  \\
& & B4 & 0.28 $\pm$ 0.06 & 0.47 $\pm$ 0.03 & 0.63 $\pm$ 0.08 & 0.74 $\pm$ 0.05 & 0.83 $\pm$ 0.07 \\ \cmidrule(lr){2-8}

& \multirow{2}{*}{$\epsilon$} 
& \pkg & 3.90   & 8.53   & 19.81  & 55.3    & 81.23   \\
& & B4 & 100 & 200 & 300 & 400 & 500  \\ \midrule

 \multirow{4}{*}{ \rotcell{Credit}}& \multirow{2}{*}{Accuracy} 
 & \pkg & 0.65 $\pm$ 0.02 & 0.70 $\pm$ 0.05 & 0.73 $\pm$ 0.05 & 0.78 $\pm$ 0.04 & 0.80 $\pm$ 0.06 \\
& & B4 & 0.65 $\pm$ 0.01 & 0.71 $\pm$ 0.02 & 0.73 $\pm$ 0.04 & 0.79 $\pm$ 0.02 & 0.80 $\pm$ 0.02 \\ \cmidrule(lr){2-8}

& \multirow{2}{*}{$\epsilon$} 
& \pkg & 3.90   & 8.53   & 19.81  & 55.3    & 81.23   \\
& & B4 & 100 & 200 & 300 & 400 & 500 \\ \bottomrule
\end{tabular}
\end{sc}
\end{small}
\end{center}
\end{table}

\subsection{Privacy-utility Tradeoff by Varying the Number of Queries}
\label{sec:varyingmorequeries}
To observe the privacy-utility tradeoff based on the number of queries answered by the teacher model, we fix the value of noise parameter $\lambda=0.4$ and vary number of queries in the range \{1000, 500, 300, 200, 100, 10\}. Note that this experiment is an extended version of Section \ref{sec:varyingqueries} in which we vary the number of answered queries from 500 to 1000. To visually observe the privacy-utility tradeoff at for different number of queries, we plot the accuracy on the left y-axis and the corresponding privacy in Figure \ref{fig:privacyutility-vary-more-query}. On all datasets, we observe that lower queries leads to lower accuracy and higher privacy guarantee (low epsilon). Furthermore, we observe a severe degradation in performance when the number of queries is below $400$. Although the privacy guarantee is appealing but unsurprisingly the utility suffers because of noisy and limited training data used to train the public data.

\begin{figure}[h!]
\centering
\begin{subfigure}[b]{0.49\linewidth}
\includegraphics[width=\textwidth]{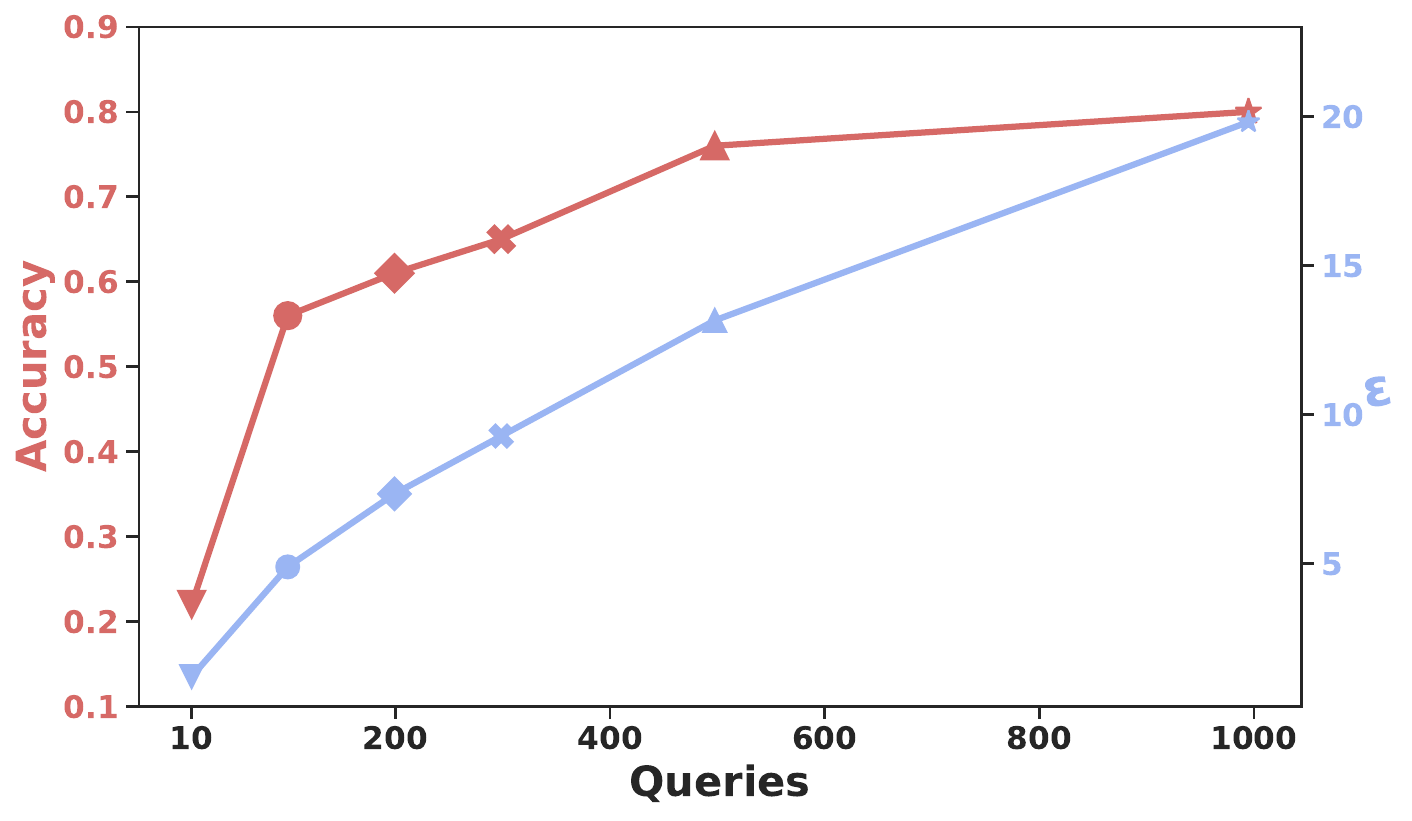}
\caption{Amazon}
\label{fig:amazon-priv-util-vary-q}
\end{subfigure}
\hfill
\begin{subfigure}[b]{0.49\linewidth}
\includegraphics[width=\textwidth]{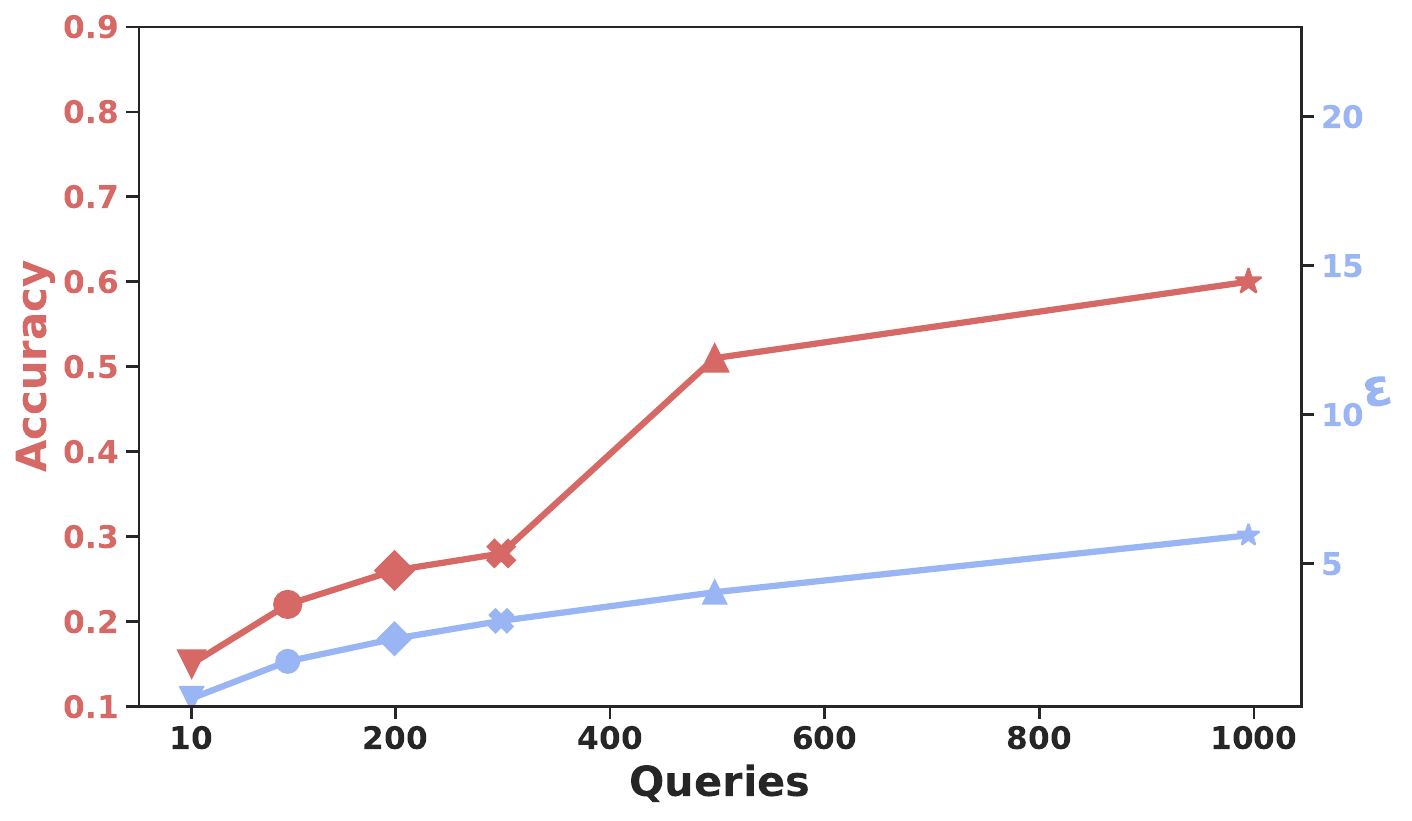}
\caption{Amazon2M}
\label{fig:amazon2m-priv-util-vary-q}
\end{subfigure}
\hfill
\begin{subfigure}[b]{0.49\linewidth}
\includegraphics[width=\textwidth]{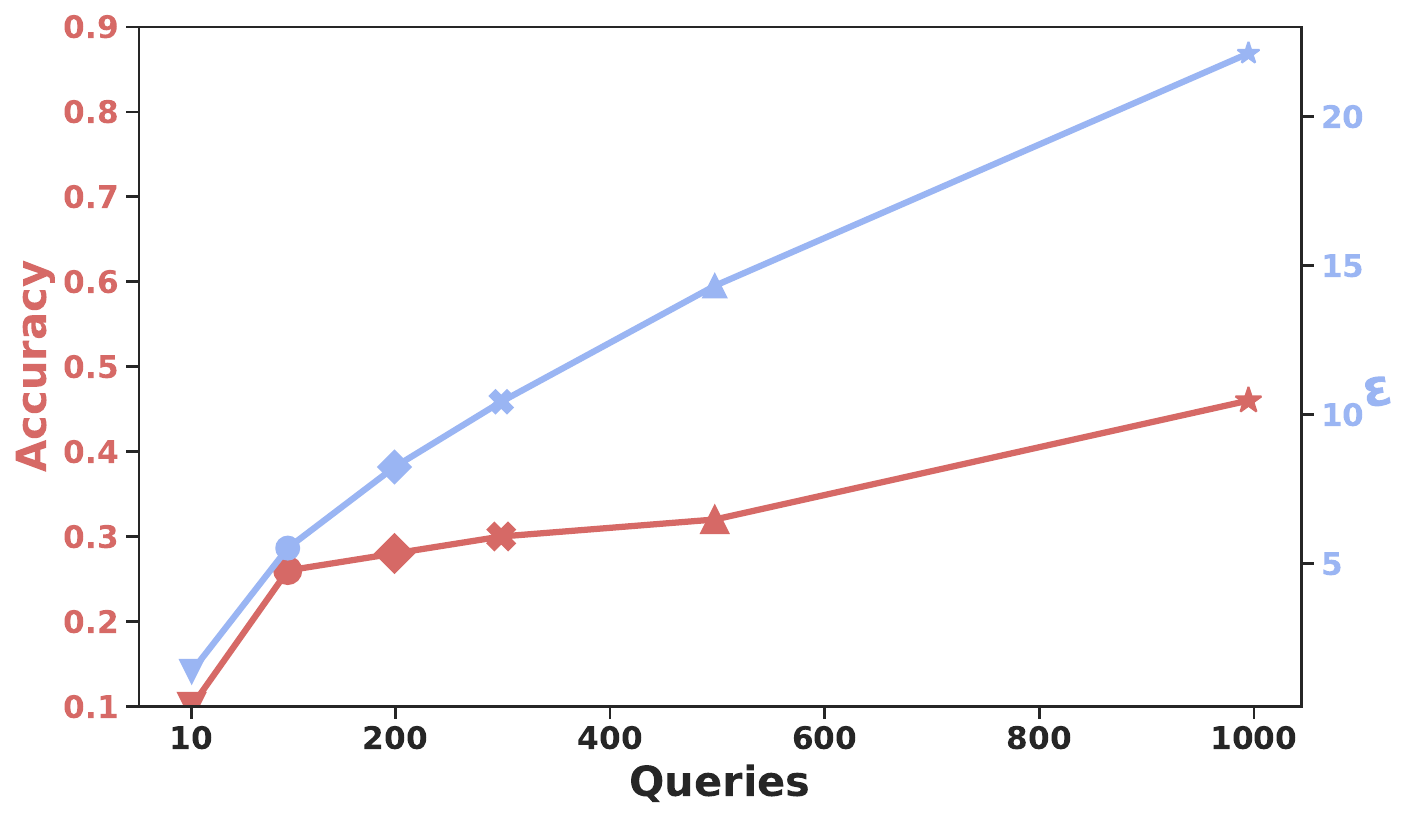}
\caption{ArXiv}
\label{fig:arxiv-priv-util-vary-q}
\end{subfigure}
\hfill
\begin{subfigure}[b]{0.49\linewidth}
\includegraphics[width=\textwidth]{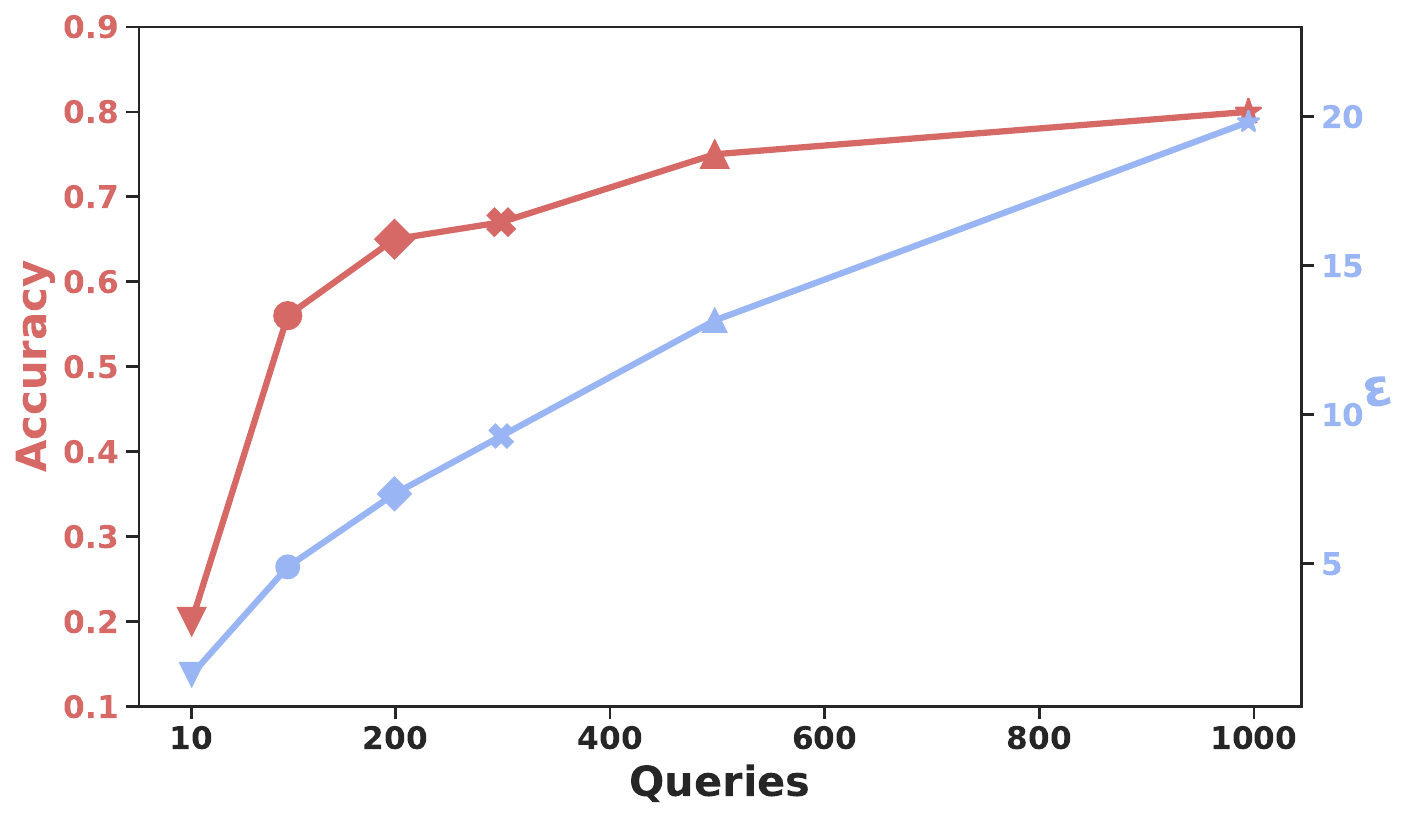}
\caption{Reddit}
\label{fig:reddit-priv-util-vary-q}
\end{subfigure}
\hfill
\begin{subfigure}[b]{0.49\linewidth}
\includegraphics[width=\textwidth]{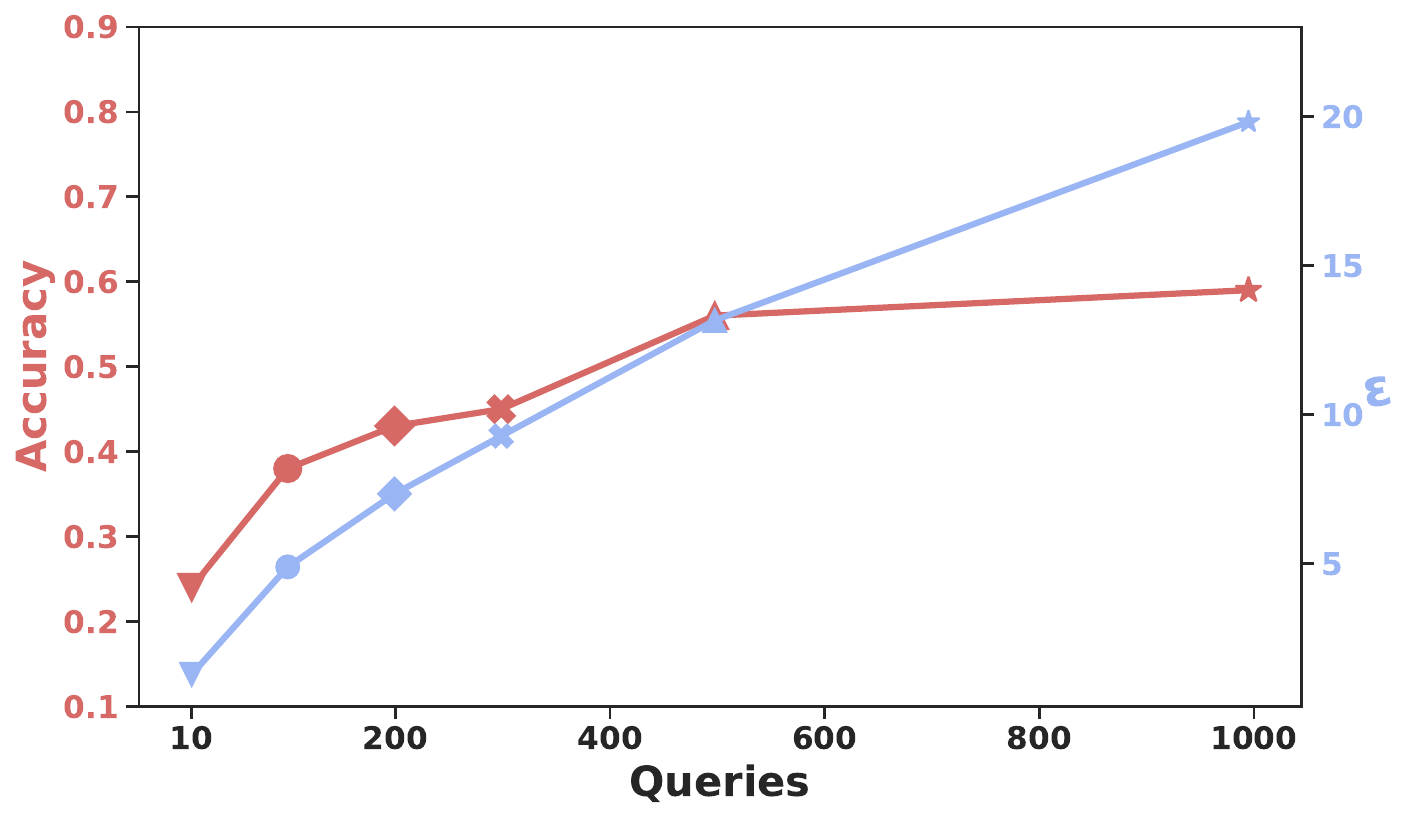}
\caption{Facebook}
\label{fig:facebook-priv-util-vary-q}
\end{subfigure}
\hfill
\begin{subfigure}[b]{0.49\linewidth}
\includegraphics[width=\textwidth]{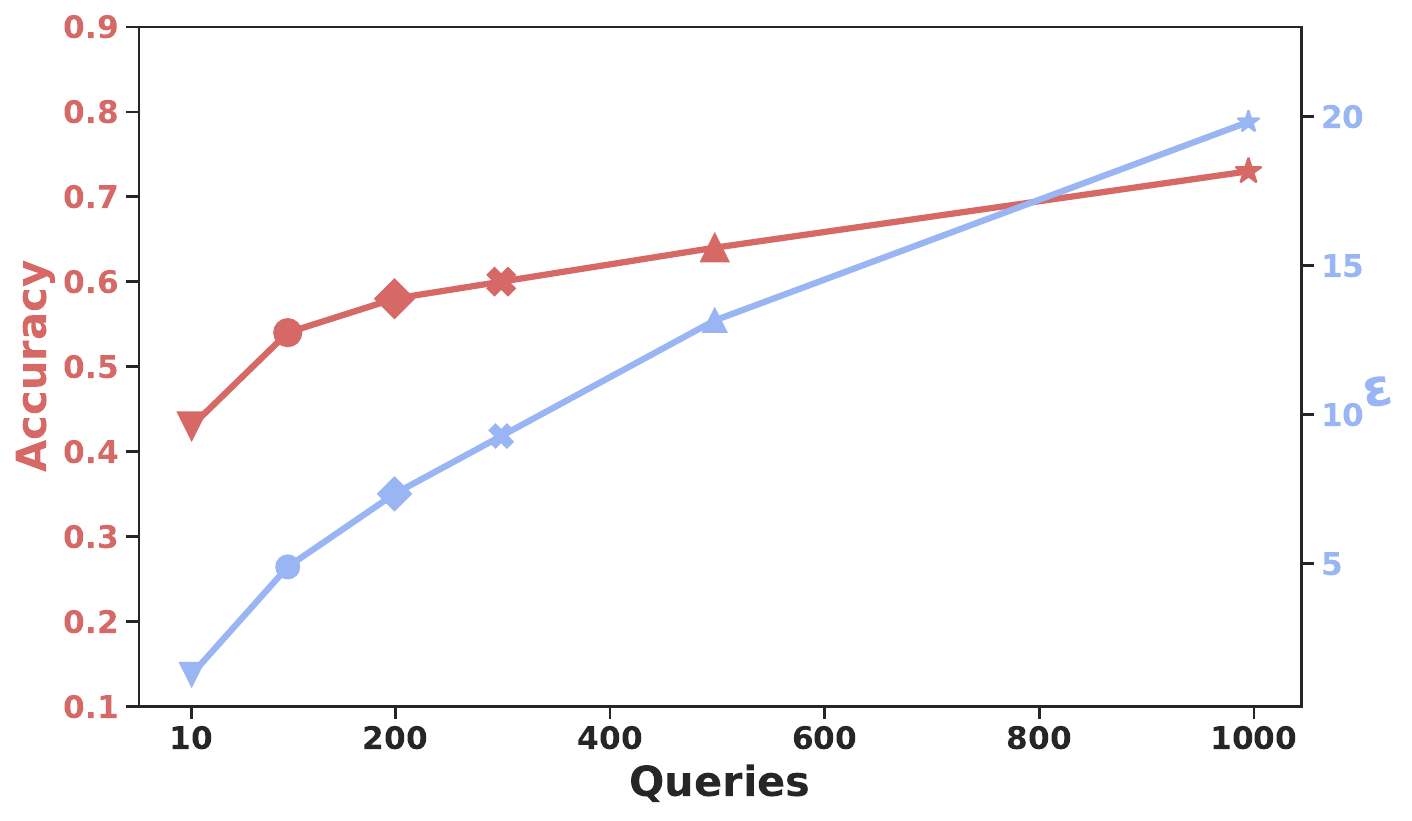}
\caption{Credit}
\label{fig:credit-priv-util-vary-q}
\end{subfigure}
\caption{Privacy-utility tradeoff for different values of $|Q|$. We fix $\lambda=0.4$ and $\gamma=0.3$ for all datasets except for Amazon2M with $\gamma=0.1$. The accuracy is represented by the red line on the left y-axis, while the achieved privacy is indicated by the blue line on the right y-axis. The markers on both lines correspond to the privacy-utility tradeoff for a specific number of queries.}
\label{fig:privacyutility-vary-more-query}
\end{figure}

\subsection{Improving Performance via Pre-training}
\label{sec:appendix_pretraining}
As shown in Section \ref{sec:sensitivity}, \pkg is not sensitive to the size of the public or private data. However,
in settings where a larger public dataset is available, we ask whether the student model can benefit from such large "unlabeled" data.
Therefore, we propose to leverage the abundance of the unlabeled dataset by pre-training the student GNN model using unsupervised objectives on the public dataset. The unsupervised objective is based on the graph reconstruction objective \cite{hamilton2017inductive} which assigns similar embedding to connected nodes. After pre-training, we initialize the student model with the pre-trained model and fine-tune the student model using the privately labeled nodes. We emphasize that the pre-training does not constitute any privacy risks since the data is public. One major advantage of unsupervised pre-training is that it acts as a regularizer and provides better generalization than randomly initializing the weights \cite{erhan2010does}. We conducted a preliminary experiment on the Amazon2M and ArXiv datasets. As shown in Table \ref{tab:pretrain}, on a smaller number of queries ($|\publicQueryNodes|$=300), we obtain an increment of up to 35\% on Amazon2M and 17\% on ArXiv over the randomly initialized \pkg. We also observe up to 4\% and 9\% increase on higher number of queries ($|\publicQueryNodes|$=1000) on Amazon2M and ArXiv, respectively.

As pointed out by \cite{hu2019strategies}, pre-training GNNs is still relatively difficult and may require domain expertise to carefully select examples that are well correlated to the downstream task to avoid ``negative'' transfer (hurt generalization). Therefore, we leave an in-depth study of how pre-training can help the better performance of \pkg on more datasets as future work.

\begin{table}[htbp]

\begin{center}
\begin{small}
\begin{sc}
\caption{Performance of pre-training the student model on Amazon2M and ArXiv dataset. $\Delta$ indicates the \% difference between \pkg without pre-training and the pre-trained \pkg (Pre-\pkg).
}
\label{tab:pretrain}

\centering
\small
\setlength{\tabcolsep}{3.5pt}
\settowidth\rotheadsize{Amazon2M}
\begin{tabular}{cccccccc} \toprule
& $|\publicQueryNodes|$  & \textbf{$\lambda \rightarrow$} & {$0.1$}   & {$0.2$}    & {$0.4$}    & {$0.8$}     & {$1.0$}       \\ \midrule
 \multirow{9}{*}{ \rotcell{Amazon2M}}& \multirow{3}{*}{1000} & \pkg                                                     & 0.30                   & 0.42                   & 0.60                    & 0.66                   & 0.70        \\
                        && Pre-\pkg           & 0.31                   & 0.44                   & 0.61                   & 0.68                   & 0.72        \\
                        && $\Delta$          & 3\%                   & 4\%                   & 2\%                   & 3\%                   & 3\%  \\ \cmidrule(lr){2-8}
& \multirow{3}{*}{500} & \pkg                                                      & 0.22                   & 0.37                   & 0.51                    & 0.55                   & 0.64        \\
                        && Pre-\pkg           & 0.24                   & 0.40                   & 0.53                   & 0.58                   & 0.67        \\
                        && $\Delta$          & 9\%                   & 8\%                   & 4\%                   & 5\%                   & 5\% \\ \cmidrule(lr){2-8}
& \multirow{3}{*}{300} & \pkg                                                      & 0.12                   & 0.19                   & 0.28                    & 0.34                   & 0.36        \\
                        && Pre-\pkg           & 0.17                   & 0.22                   & 0.30                   & 0.36                   & 0.43        \\
                        && $\Delta$          & 35\%                   & 15\%                   & 7\%                   & 6\%                   & 18\% \\\midrule

\multirow{9}{*}{ \rotcell{ArXiv}}& \multirow{3}{*}{1000} & \pkg                                                     & 0.26                    & 0.33                   & 0.43                    & 0.48                   & 0.52    \\
& &Pre-\pkg  & 0.28                    & 0.36                   & 0.47                    & 0.51                   & 0.54  \\
& &$\Delta$                                                      & 7\%                  & 9\%                   & 9\%                   & 6\%                   & 4\%\\ \cmidrule(lr){2-8}
& \multirow{3}{*}{500} & \pkg                                                      & 0.21                   & 0.28                   & 0.32                   & 0.39                   & 0.41        \\
                        && Pre-\pkg           & 0.22                   & 0.30                   & 0.34                   & 0.42                   & 0.42        \\
                        && $\Delta$           & 4\%                   & 7\%                   & 6\%                   & 7\%                   & 2\% \\ \cmidrule(lr){2-8}
& \multirow{3}{*}{300} & \pkg                                                      & 0.10                   & 0.16                   & 0.30                    & 0.34                   & 0.36        \\
                        && Pre-\pkg           & 0.12                   & 0.19                   & 0.34                   & 0.37                   & 0.40        \\
                        && $\Delta$          & 18\%                   & 17\%                   & 12\%                   & 8\%                   & 11\% \\ \bottomrule
\end{tabular}
\end{sc}
\end{small}
\end{center}
\end{table}

\section{Differences with Existing Works}
\label{sec:differences-existing-works}
The following section explicitly outlines the differences between our approach and existing works.
\subsection{Differences with LPGNN \cite{sajadmanesh2020locally}}
In LPGNN, noise is directly added to both the node features and the true labels via local DP. Therefore, their derived DP guarantee is based on the multi-bit mechanism that adds noise to the node features and randomized response mechanism that adds noise to the true labels. This makes LPGNN not directly comparable with the privacy guarantee of \pkg. LPGNN primarily aims to achieve local differential privacy for individual node features whereas \pkg focuses on the global differential privacy for the entire graph and the released GNN model.

\subsection{Differences with VFGNN \cite{zhou2020vertically}}
Although VFGNN assumes that the computational graph of GNN is split into two where the private data-related computations are handled by the data holders, and the non-private data-related computations is handled by a semi-honest server, their approach is largely disparate from \pkg.  

First, the initial embedding of the nodes of each data holder is computed by applying a secure multi-party computation (SMC) technique individually. Then, a local embedding is generated by aggregating the features of neighboring nodes, and either James-Stein Estimator or Gaussian noise is added directly to the local embedding to ensure DP. Finally, the local embedding is later combined for different data holders to form a global node embedding. 

One possible modification to compare the performance of VFGNN with \pkg is to remove the SMC technique, which is the key strength of their privacy framework, and obtain the local embedding without this cryptographic measure. However, this modification has a major drawback as it requires adding Gaussian noise directly to the local embedding to guarantee local differential privacy, which is not as strong as the privacy guarantee of \pkg. Therefore, it is not feasible to make a fair comparison between the two methods.

\subsection{Differences with FedGNN \cite{wu2021fedgnn}}
The FedGNN framework relies on each data holder possessing a complete dataset consisting of nodes, edges, and labels. The first step in FedGNN involves training local GNNs with the original data and correct labels for each data holder to obtain local gradients. To ensure differential privacy, noise is added to the local gradients. FedGNN adopts a horizontal federated learning approach, where parties collaborate on the same task while retaining their data locally. 

On the other hand, \pkg cannot carry out local training as the public graph does not contain the necessary labels for computing a local GNN. Additionally, the DP guarantees of FedGNN are achieved by adding noise directly to the local gradient of each local GNN, which is not feasible in \pkg.

\subsection{Differences with SAPGNN \cite{shan2021towards} }
SAPGNN, like FedGNN, generates local embeddings for each data holder and updates a global model by combining all local embeddings. However, SAPGNN ensures privacy using an n-out-of-n secret sharing mechanism to update local model weights. Each data holder secretly shares their local gradient with the others, adds up the shares, and reconstructs the entire gradient by gathering the aggregated results from others. The use of a cut layer ensures that raw data is only involved in local computations and will not be accessible to others, guaranteeing data privacy. Thus, comparing \pkg with SAPGNN is not feasible.

\subsection{Differences with HETERORR \cite{tran2022heterogeneous}}
HETERORR adopts a similar approach to the previous works above by incorporating a noise addition mechanism using randomized response to perturb the node features and graph structure directly. The original dataset is discarded, and only the noisy data is employed for training. As a result, HETERORR relies on node-level local DP and edge-level local DP guarantees. Thus, the privacy guarantees of \pkg cannot be compared.

\subsection{Differences with GAP \cite{sajadmanesh2022gap}}
The approach employed by GAP involves obtaining lower-dimensional node features independent of the graph structure and preserving the privacy of the node features by applying DP-SGD in the case of node-level privacy, followed by the addition of Gaussian noise to the multi-hop aggregated node embeddings. During the encoding phase of GAP, node features and labels are required to compute the embedding, which differs from \pkg where not all node features and labels are available. Moreover, the GAP framework also requires both the node labels, non-aggregated node embeddings (output of the encoder module), and the aggregated embeddings to train the classification module. It should be noted that the public data of \pkg does not have any labels. Therefore, GAP and \pkg cannot be compared due to the differences in their data requirements.  

However, just to consider the possibility or to explore the idea of comparison, we can present a rough comparison as we use the same Amazon dataset (Amazon2M in \pkg). According to Figure 6 (right) of the GAP paper, at an $\epsilon$ value of 16, GAP achieved 78\% accuracy while \pkg achieved a comparable accuracy of 70\% at a similar $\epsilon$. It should be noted that the Amazon dataset used in GAP has been preprocessed to include nodes with the top $10$ classes, providing high-quality data for training, whereas \pkg uses the default dataset with $47$ classes. Additionally, \pkg used only $1000$ labeled nodes (queries) for training the released model, while GAP used 1.3 million labeled nodes for training. It is essential to emphasize that this comparison is not reasonable due to the differences in the experimental settings, but it provides an idea.

\subsection{Differences with \cite{zhang2022towards}}
The paper employs the approximate personalized PageRank (APPR) to protect graph topology information, by either adding Gaussian noise or applying the exponential mechanism to output the top-K relevant neighbors for computing the node feature aggregation. They replace the message passing algorithm of a standard GNN model with APPR and train the modified model using DPSGD. However, their approach cannot be directly compared with \pkg for several reasons. Firstly, their method of using DPSGD is not applicable to current state-of-the-art GNN models that are based on the message passing algorithm as DPSGD assumes data independence, a condition violated for graphs. Secondly, their privacy guarantee is somewhat unclear. The authors themselves cannot represent the overall epsilon, which is a combination of $\epsilon_{pr}$ and $\epsilon_{sgd}$, as a result, they kept one fixed and the other variable. To compare the two methods fairly, one could naively fix the value of K, $\epsilon_{pr}$, and $\epsilon_{sgd}$ of their approach against the epsilon at lambda=0.4 of \pkg. However, such a comparison would lack a fair basis.

\subsection{Differences with \cite{daigavane2021node}}
The paper, like \cite{zhang2022towards} mentioned earlier, employs DPSGD to ensure privacy during GNN training. Also, their privacy guarantees are limited to 1-layer GNN, which renders a comparison infeasible. Besides, as the authors note, if an r-layer GNN is employed, then during inference for a node U, the features of all other nodes V in the entire r-layer neighborhood of U must be available non-privately. That is, to perform inference on a node U in a multi-layer GNN, one needs to access the features of all nodes in its neighborhood up to r-layers which are not achievable in our setting. More precisely, their method is not equipped to deal with the inductive scenario where there exist two separate graphs that do not intersect, which is the case in the framework of \pkg.

\section{Datasets}
\label{sec:datasets_details}
In the following, we describe the datasets used in this work.

\mpara{Amazon.} We use the Amazon Computers dataset \cite{shchur2018pitfalls} which is a subset of the Amazon co-purchase graph \cite{McAuley2015Image}. The nodes represent products, and the node features are product reviews represented as bag-of-words. The edges indicate that two products are frequently bought together, while class labels are the product categories. We created the private graph on 2500 nodes and used 3000 nodes each for the public train and test sets.
The task for this dataset is to assign products to their respective product category.

\mpara{Amazon2M.} The Amazon2M dataset \cite{chiang2019cluster} is the largest Amazon co-purchasing network where each node is a product, and the features are extracted via bag-of-words on the product description. Edges represent whether two products are purchased together. It consists of over 2 million nodes. Hence, Amazon2M. We created a private graph on 1 million nodes and a public graph on the remaining nodes.

\mpara{Reddit.} The Reddit dataset \cite{hamilton2017inductive} represents the post-to-post interactions of a user. An edge between two posts indicates that the same user commented on both posts. The labels correspond to the community of a post. We randomly sampled 300 nodes from each class for the private graph and selected 15000 nodes each for the public train and public test nodes, respectively.

\mpara{Facebook.} The Facebook dataset \cite{traud2012social} is an anonymized Facebook social network of users from 100 American institutions. We utilized the social network among UIUC students where the task is to predict their class year. The nodes represent Facebook users(students), and the edges indicate friendship. The private graph is constructed over half of the nodes and the public graph on the other half.

\mpara{ArXiv.} The ArXiv dataset \cite{hu2020open} is a citation network where each node represents an arXiv paper and edges represent that a paper cites the other. The node feature is a 128-dimensional feature vector obtained by averaging the embedding of words in the title and abstract of each paper. Our private graph is created from 90941 papers published until 2017. We used 48603 papers published in 2018 and 29799 papers published since 2019 for our public train nodes and test nodes, respectively.

\mpara{Credit.} The Credit defaulter graph \cite{agarwal2021towards} is a financial network where nodes are individuals and edges exist between individuals based on the similarity of their spending and payment pattern. The task is to determine whether an individual will default on their credit card payment. The private and public graphs consist of 10000 and 15000 nodes, respectively.

We remark that irrespective of the size of the public data, we only pseudo-label a very small number of query nodes (in our experiments, the maximum query size was $1000$) of the public graph to train the student model.

\begin{table}[htb]
\centering
\setlength{\tabcolsep}{2.5pt}
\caption{Data Statistics.\ $|V|$ and $|E|$ denote the number of vertices and edges in the corresponding graph dataset. $deg$ is the average degree of the graph. We select 50\% of Amazon, Amazon2M, Reddit and Credit, 40\% of ArXiv, and 20\% of the Facebook dataset as the test set from the public graph.}
\label{tab:datastat}
\begin{tabular}{cccccS[table-format=2.2,table-number-alignment=left,round-precision=2,round-mode=places]rrS[table-format=2.2,table-number-alignment=left,round-precision=2,round-mode=places]} \toprule
 &  &  & \multicolumn{3}{c}{\textbf{Private}} & \multicolumn{3}{c}{\textbf{Public}} \\ \cmidrule(r){4-6} \cmidrule(r){7-9}

 & {\#class} & {$|X|$} & {$|\private{V}|$} & {$|\private{E}|$} & {$deg$} & {$|\public{V}|$} & {$|\public{E}|$} & {$deg$} \\ \midrule
Amazon & 10 & 767 & 2500 & 11595 & 4.64 & 6000 & 37171 & 6.20 \\
Amazon2M & 47 & 100 & 1\textbf{M} & 12.7\textbf{M} & 12.73 & 1.4\textbf{M} & 21.8\textbf{M} & 15.08 \\
Reddit & 41 & 602 & 12300 & 266148 & 21.64 & 30000 & 876846 & 29.23\\
Facebook & 34 & 501 & 13401 & 270992 & 20.22 & 13402 & 266141 & 24.82 \\
ArXiv & 40 & 128 & 90941 & 187419 & 2.06 & 78402 & 107900 & 1.38 \\
Credit & 2 & 13 & 10818 & 185916 & 17.19 & 15000 & 361093 & 24.07 \\
 \bottomrule
\end{tabular}
\end{table}

\section{Baselines}
\label{sec:appendix_baselines}
\textbf{Non-Private Inductive Baseline (B1).} In the non-private inductive baseline, we train a single GNN model on all private data and test the performance of the model on the test set of the public data. This corresponds to releasing the model trained on complete private data without any privacy guarantees. 

\textbf{Non-Private Transductive Baseline (B2).} This non-private baseline estimates the ``best possible'' performance on the public data. Specifically, a GNN model is trained using the training set (50\% of the public data for Amazon, Amazon2M, Reddit, and Credit, 60\% of ArXiv, and 80\% of the Facebook dataset) and their corresponding ground truth labels in a transductive setting. We then test the model on the test set (the remaining 50\% of the public data for Amazon, Amazon2M, Reddit, and Credit, and 40\% of ArXiv and 20\% of the Facebook dataset).

\textbf{PATE.} We adopt the PATE framework \cite{papernot2016semi} where the private node-set is partitioned into $n$ disjoint subsets of nodes.  Then we train GNN models (teacher models) on each dataset (the corresponding induced graph) separately. We query each of the teacher models with nodes from the public domain and aggregate the prediction of all teacher models based on their label counts. Then, independent random noise is added to each of the vote counts. The obtained noisy label is then used in training the student model, which is later released. We call this \pateg. We also used multilayer perceptron (MLP) instead of GNN models for training on the disjoint node subset of the private graph. We denote the MLP approach as \patem. Note that only the features (no edge information) will be used for training teacher models in \patem. Our choice of $n$ was experimentally chosen. 
We observe that $n > 20$ for the Amazon, Facebook, Credit, and Reddit datasets and $n > 50$ for the ArXiv lead to extremely small data in each partition and low accuracy. We also set $n=50$ for Amazon2M.

\subsection{Model and Hyperparameter Setup}
\label{sec:hyperparameter}
For each of the private (teacher) ($\private{\Phi}$) and public (student) ($\public{\Phi}$) GNN models, we used a two-layer GraphSAGE model \cite{hamilton2017inductive} with hidden dimension of 64 and ReLU activation function. 
We applied batch normalization to the output of the first layer. We applied a dropout of 0.5 and a learning rate of 0.01. For the multilayer perceptron model (MLP) used in \patem, we used three fully connected layers and ReLU activation layers. We trained all models for 200 epochs using the negative log-likelihood loss function for node classification with the Adam optimizer. 
All our experiments were conducted for $10$ different instantiations using PyTorch Geometric library \cite{fey2019fast} and Python3 on 11GB GeForce GTX 1080 Ti GPU.

\section{The Runtime and Space Requirement of PRIVGNN}
\label{sec:runtime}
We remark that the runtime and space requirements of our proposed \pkg are on par or even less than that of other private models like PATE.
Importantly, since the user is only exposed to the public model, we point out that the time and space requirements of the released model are the same as that of the corresponding non-private GNN models.

For the training time, we train $Q$ models corresponding to $Q$ queries.  As shown in Figure \ref{fig:varyingquery}, we can decrease $Q$ to $500$ (equivalent to training $500$ teacher models). The runtime of \pkg can be further reduced by parallel training of models. Note that the original PATE trains a sufficiently large number of teacher models to achieve a reasonable privacy-accuracy trade-off. For instance, on the Glyph dataset (See Table $1$ of \cite{papernot2018scalable}), PATE trained $5000$ teacher models. With our requirement of a much smaller number of teacher models, \pkg is faster or better than PATE in terms of time and space requirements.  

The space requirement of \pkg is the same as the space requirements of one GNN model because we need not save the trained teacher model after querying them once with the query nodes. Therefore, they can be discarded after training which leads to no significant space overhead.
For PATE, on the other hand, all teacher models are required to answer each query leading to its higher space requirements.

\section{Limitations and Future Works}
\label{sec:limit-future}
\mpara{Query reduction.} As observed in the experiments, fewer queries significantly reduced the overall privacy budget $\epsilon$. However, this has a relative effect on the accuracy of the released model. Hence, a more sophisticated active learning approach or unsupervised methods for query reduction that can achieve a high accuracy are worth exploring.

\mpara{Analysing \pkg's behavior.} The use of explanation methods to analyze the properties of the learned representation of the \pkg is desirable. This analysis will help understand the internal workings of private models and compare the representations of private and non-private models.

\end{document}